\newcommand{\xxnote}[3]{}
  \renewcommand{\xxnote}[3]{\color{#2}{#1: #3}}
\newtheorem{theorem}{Theorem}[section]
\newtheorem*{theorem*}{Theorem}
\newtheorem{corollary}{Corollary}[theorem]
\newtheorem*{corollary*}{Corollary}
\newtheorem*{lemma*}{Lemma}
\theoremstyle{definition}
\newtheorem{definition}{Definition}[section]
\newtheorem*{remark}{Remark}
\DeclareMathOperator*{\argmax}{arg\,max}
\newcommand{\Dagger}{\textsc{DAgger}}
\newcommand{\expertpolicy}{\pi_e}
\newcommand{\perfectthm}{Suppose $\widehat{M}$ is an e-stop variant of $M$ such that $\hat{\mathcal{S}} = \{s | h(s) > 0 \}$ where $h(s)$ denotes the probability of hitting state $s$ in a roll-out of $\expertpolicy$. Let $\hat \pi^* = \argmax_{\pi \in \Pi}J_{\widehat{M}}(\pi)$ be the optimal policy in $\widehat{M}$. Then $J(\hat \pi^*) \geq J(\expertpolicy)$.}
\newcommand{\imperfectthm}{Consider $\widehat{M}$, an e-stop variation on MDP $M$ with state spaces $\hat{\mathcal{S}}$ and $\mathcal{S}$, respectively. Given an expert policy, $\expertpolicy$, let $h(s)$ denote the probability of visiting state $s$ at least once in an episode roll-out of policy $\expertpolicy$ in $M$. Then
\begin{equation}
J(\expertpolicy) - J(\hat{\pi}^*) \leq H \sum_{s\in \mathcal{S} \setminus \hat{\mathcal{S}}} h(s)
\end{equation}
where $\hat{\pi}^*$ is the optimal policy in $\widehat{M}$. Naturally if we satisfy some ``allowance,'' $\xi$, such that $\sum_{s\in \mathcal{S}\setminus\hat{\mathcal{S}}} h(s) \leq \xi$ then $J(\expertpolicy) - J(\hat{\pi}^*) \leq \xi H$.}
\newcommand{\stationarythm}{Recall that $\rho_{\expertpolicy}(s)$ denotes the average state distribution following actions from $\expertpolicy$, $\rho_{\expertpolicy}(s) = \frac{1}{H}\sum_{t=0}^{H-1} \rho_{\expertpolicy}^t(s)$. Then
\begin{equation}
J(\expertpolicy) - J(\hat{\pi}^*) \leq \rho_{\expertpolicy}(\mathcal{S} \setminus \hat{\mathcal{S}}) H^2
\end{equation}}
\newcommand{\empiricalthm}{The e-stop MDP $\widehat{M}$ with states $\hat{\mathcal{S}}$ in \cref{alg:reset} has asymptotic sub-optimality
\begin{equation}
J(\expertpolicy) - J(\hat{\pi}^*) \leq (\xi + \epsilon) H
\end{equation}
with probability at least $1 - |\mathcal{S}|e^{-2\epsilon^2 n / |\mathcal{S}|^2}$, for any $\epsilon > 0$. Here $\xi$ denotes our approximate state removal ``allowance'', where we satisfy $\sum_{s\in \mathcal{S}\setminus\hat{\mathcal{S}}} \hat h(s) \leq \xi$ in our construction of $\widehat{M}$ as in \cref{thm:imperfect}.}
\DeclarePairedDelimiter{\ceil}{\lceil}{\rceil}
\newcommand{\DrawEnvironment}{
    \node [ForestGreen, fill=ForestGreen, cloud, draw,cloud puffs=8,cloud puff arc=140, aspect=1, inner ysep=0.7em] at (2.25, -0.3){};
    \node [ForestGreen, fill=ForestGreen, cloud, draw,cloud puffs=8,cloud puff arc=140, aspect=1, inner ysep=0.7em] at (4, 1.3){};
    \node [ForestGreen, fill=ForestGreen, cloud, draw,cloud puffs=8,cloud puff arc=140, aspect=1, inner ysep=0.7em] at (0, 0.75){};
    \node [ForestGreen, fill=ForestGreen, cloud, draw,cloud puffs=8,cloud puff arc=140, aspect=1, inner ysep=0.7em] at (6, -1){};
    \node at (0, -1)[circle,fill,inner sep=2pt]{};
}
\title{Mo$'$ States Mo$'$ Problems:\\ Emergency Stop Mechanisms from Observation}
\author{%
  Samuel Ainsworth \qquad Matt Barnes \qquad Siddhartha Srinivasa \\
  \\
  School of Computer Science and Engineering\\
  University of Washington\\
  \texttt{\{skainswo,mbarnes,siddh\}@cs.washington.edu} \\
}
\begin{document}

\maketitle

\begin{abstract}
In many environments, only a relatively small subset of the complete state space is necessary in order to accomplish a given task. We develop a simple technique using emergency stops (e-stops) to exploit this phenomenon. Using e-stops significantly improves sample complexity by reducing the amount of required exploration, while retaining a performance bound that efficiently trades off the rate of convergence with a small asymptotic sub-optimality gap. We analyze the regret behavior of e-stops and present empirical results in discrete and continuous settings demonstrating that our reset mechanism can provide order-of-magnitude speedups on top of existing reinforcement learning methods.
\end{abstract}

\section{Introduction}
In this paper, we consider the problem of determining when along a training roll-out 
feedback from the environment is no longer beneficial, and an intervention such as resetting the agent to the initial state distribution is warranted. We show that such interventions can naturally trade off a small sub-optimality gap for a dramatic decrease in sample complexity. In particular, we focus on the reinforcement learning setting in which the agent has access to a reward signal in addition to either (a) an expert supervisor triggering the e-stop mechanism in real-time or (b) expert state-only demonstrations used to ``learn'' an automatic e-stop trigger. Both settings fall into the same framework.

Evidence already suggests that using simple, manually-designed heuristic resets can dramatically improve training time. 
For example, the classic pole-balancing problem originally introduced in \citet{WidrowSmith1964} prematurely terminates an episode and resets to an initial distribution whenever the pole exceeds some fixed angle off-vertical.
More subtly, these manually designed reset rules are hard-coded into many popular OpenAI gym environments \cite{1606.01540}.

Some recent approaches have demonstrated empirical success learning when to intervene, either in the form of resetting, collecting expert feedback, or falling back to a safe policy \cite{Eysenbach2017,Laskey2016,Richter2017,Kahn2017}. We specifically study reset mechanisms which are more natural for human operators to provide -- in the form of large red buttons, for example -- and thus perhaps less noisy than action or value feedback \cite{Bagnell2015}. Further, we show how to build automatic reset mechanisms from state-only observations 
which are often widely available, e.g. in the form of videos \cite{Torabi2018}.

The key idea of our method is to build a \emph{support set} related to the expert's state-visitation probabilities, and to terminate the episode with a large penalty when the agent leaves this set, visualized in \cref{fig:overview}. 
This support set defines a modified MDP and can either be constructed implicitly via an expert supervisor triggering e-stops in real-time or constructed a priori based on observation-only roll-outs from an expert policy.
As we will show, using a support set explicitly restricts exploration to a smaller state space while maintaining guarantees on the learner's performance. 
We emphasize that our technique for incorporating observations applies to \emph{any} reinforcement learning algorithm in either continuous or discrete domains.

The contributions and organization of the remainder of the paper is as follows.
\begin{itemize}
    \item We provide a general framework for incorporating arbitrary emergency stop (e-stop) interventions from a supervisor into any reinforcement learning algorithm using the notion of support sets in \cref{sec:interventions}.
    \item We present methods and analysis for building support sets from observations in \cref{sec:learned-estop}, allowing for the creation of automatic e-stop devices.
    \item In \cref{sec:experiments} we empirically demonstrate on benchmark discrete and continuous domains that our reset mechanism allows us to naturally trade off a small asymptotic sub-optimality gap for significantly improved convergence rates with any reinforcement learning method.
    \item Finally, in \cref{sec:support-types}, we generalize the concept of support sets to a spectrum of set types and discuss their respective tradeoffs.
\end{itemize}

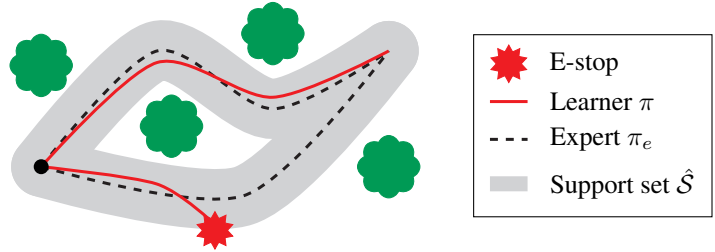
\begin{figure}
\centering
\begin{tikzpicture}[scale=0.77]
    
    \draw [Black!20, line width=8mm, line cap=round] plot [smooth] coordinates { (0,-1) (2,1) (4,0) (6,1)};
    \draw [Black!20, line width=8mm, line cap=round] plot [smooth] coordinates { (0,-1) (3.5,-1.5) (6,1)};
    
    \draw [Black, line width=0.4mm, dashed] plot [smooth] coordinates { (0,-1) (2,1) (4,0) (6,1)};
    \draw [Black, line width=0.4mm, dashed] plot [smooth] coordinates { (0,-1) (3.5,-1.5) (6,1)};
    
    \draw [Red, line width=0.4mm] plot [smooth] coordinates { (0,-1) (2, -1.3) (3,-2)};
    \draw [Red, line width=0.4mm] plot [smooth] coordinates { (0,-1) (2,0.8) (4,0.2) (6,1)};
    
    \node[Red, star,star points=10,fill=Red, draw] at (3,-2.1){};
    
    
    \DrawEnvironment
    \matrix [draw,right=7mm] at (current bounding box.east) {
        \node [Red,star,star points=10,fill=Red,draw,label={[label distance=7pt]0:E-stop}] {}; \\
        
        \draw [Red, line width=0.4mm] plot [smooth] coordinates {(-0.075,0) (0.425, 0)};
        \node [label={[label distance=10pt]0:Learner $\pi$}] {}; \\
        
        \draw [Black,dashed, line width=0.4mm] plot [smooth] coordinates {(-0.075,0) (0.425, 0)};
        \node [label={[label distance=10pt]0:Expert $\expertpolicy$}] {}; \\
        
        \draw [Black!20,line width=2mm] plot [smooth] coordinates {(-0.075,0) (0.425, 0)};
        \node [label={[label distance=10pt]0:Support set $\hat{\mathcal{S}}$}] {}; \\
    };
\end{tikzpicture}

\caption{A robot is tasked with reaching a goal in a cluttered environment. Our method allows incorporating e-stop interventions into any reinforcement learning algorithm. The grey support set may either be implicit (from a supervisor) or, if available, explicitly constructed from demonstrations.}
\label{fig:overview}
\end{figure}

\section{Related Work} \label{sec:related}
The problem of learning when to intervene has been studied in several contexts and generally falls under the framework of safe reinforcement learning \cite{Garca2015} or reducing expert feedback \cite{Laskey2016}. \citet{Richter2017} use an auto-encoder as an anomaly detector to determine when a high dimensional state is anomalous, and revert to a safe policy. \citet{Laskey2016} use a one-class SVM as an anomaly detector, but instead for the purposes of reducing the amount of imitation learning feedback during \Dagger~training \cite{Ross2011}. \citet{Garcia2012} perturb a baseline policy and request action feedback if the current state exceeds a minimum distance from any demonstration. \citet{Geramifard2011} assume access to a function which indicates whether a state is safe, and determines the risk of the current state by Monte Carlo roll-outs. Similarly, ``shielding'' \cite{Alshiekh2018} uses a manually specified safety constraint and a coarse, conservative abstraction of the dynamics to prevent an agent from violating the safety constraint. 
\citet{Eysenbach2017} learn a second ``soft reset'' policy (in addition to the standard ``hard'' reset) which prevents the agent from entering nearly non-reversible states and returns the agent to an initial state. Hard resets are required whenever the soft reset policy fails to terminate in a manually defined set of safe states $\mathcal{S}_{\textup{reset}}$. Our method can be seen as learning $\mathcal{S}_{\textup{reset}}$ from observation.
Their method trades off hard resets for soft resets, whereas ours learns when to perform the hard resets.

The general problem of Learning from Demonstration (LfD) has been studied in a variety of contexts. In inverse reinforcement learning, \citet{Abbeel2004} assume access to state-only trajectory demonstrations and attempt to learn an unknown reward function. 
In imitation learning, \citet{Ross2011} study the distribution mismatch problem of behavior cloning and propose \Dagger, which collects action feedback at states visited by the current policy. \textsc{Gail} addresses the problem of imitating a set of fixed trajectories by minimizing the Jensen-Shannon divergence between the policies' average-state-action distributions \cite{Ho2016}. This reduces to optimizing a GAN-style minimax objective with a reinforcement learning update (the generator) and a divergence estimator (the discriminator). 

The setting most similar to ours is Reinforcement Learning with Expert Demonstrations (RLED), where we observe both the expert's states and actions in addition to a reward function. \citet{Abbeel2005} use state-action trajectory demonstrations to initialize a model-based RL algorithm, which eliminates the need for explicit exploration and can avoid visiting all of the state-action space. \citet{Smart2000} bootstrap Q-values from expert state-action demonstrations. \citet{Maire2005} initialize any value-function based RL algorithm by using the shortest observed path from each state to the goal and generalize these results to unvisited states via a graph Laplacian. \citet{Nagabandi2017} learn a model from state-action demonstrations and use model-predictive control to initialize a model-free RL agent via behavior cloning. 
It is possible to extend our method to RLED by constructing a support superset based on state-action pairs, as described in \cref{sec:support-types}. Thus, our method and many RLED methods are complimentary. For example, DQfD \cite{Hester2018} would allow pre-training the policy from the state-action demonstrations, whereas ours reduces exploration during the on-policy learning phase.

Most existing techniques for bootstrapping RL are not applicable to our setting because they require either (a) state-action observations, (b) online expert feedback, (c) solving a reinforcement learning problem in the original state space, incurring the same complexity as simply solving the original RL problem,
or (d) provide no guarantees, even for the tabular setting.
Further, since our method is equivalent to a one-time modification of the underlying MDP, it can be used to improve any existing reinforcement learning algorithm and may be combined with other bootstrapping methods.

\section{Problem setup} \label{sec:problem}
Let $M = \langle \mathcal{S}, \mathcal{A}, P, R, H, \rho_0 \rangle$ be a finite horizon, episodic Markov decision process (MDP) defined by the tuple $M$, where $\mathcal{S}$ is a set of states, $\mathcal{A}$ is a set of actions, $P: \mathcal{S} \times \mathcal{A} \to \Delta(\mathcal{S})$ is the transition probability distribution and $\Delta(\cdot)$ is a probability distribution over some space, $R: \mathcal{S}^2 \times \mathcal{A} \to [0, 1]$ is the reward function, $H$ is the time horizon and $\rho_0 \in \Delta(\mathcal{S})$ is the distribution of the initial state $s_0$. Let $\pi \in \Pi: \mathbb{N}_{1:H} \times \mathcal{S} \to \Delta(\mathcal{A})$ be our learner's policy and $\expertpolicy$ be the potentially sub-optimal expert policy (for now assume the realizability setting, $\expertpolicy \in \Pi$).

The state distribution of policy $\pi$ at time $t$ is defined recursively as
\begin{equation}
\rho_\pi^{t+1}(s) = \sum\limits_{s_t,a_t}\rho_\pi^t(s_t) \pi(a_t | s_t, t) P(s_t, a_t, s) \hspace{3em} 
\rho_\pi^0(s) \equiv \rho_0(s).
\end{equation}
The expected sum of rewards over a single episode is defined as
\begin{equation}
    J(\pi) = \mathbb{E}_{s \sim \rho_\pi, a \sim \pi(s), s' \sim P(s, a)} R(s, a, s')
\end{equation}
where $\rho_\pi$ denotes the average state distribution, $\rho_\pi = \frac{1}{H}\sum_{t=0}^{H-1} \rho_\pi^t$.

Our objective is to learn a policy $\pi$ which minimizes the notion of external regret over $K$ episodes. Let $T=KH$ denote the total number of time steps elapsed, $(r_1, \dotsc, r_T)$ be the sequence of rewards generated by running algorithm $\mathfrak{A}$ in $M$ and $R_T = \sum_{t=1}^T r_t$ be the cumulative reward. Then the $T$-step expected regret of $\mathfrak{A}$ in $M$ compared to the expert is defined as
\begin{equation}
    \textup{Regret}_M^\mathfrak{A}(T) := \mathbb{E}_M^{\expertpolicy} \left[R_T\right] -  \mathbb{E}_M^\mathfrak{A} \left[R_T\right].
\end{equation}
Typical regret bounds in the discrete setting are some polynomial of the relevant quantities $|\mathcal{S}|$,  $|\mathcal{A}|$, $T$, and $H$. We assume we are given such an RL algorithm.
Later, we assume access to either a supervisor who can provide e-stop feedback or a set of demonstration roll-outs $D = \left\{ \tau^{(1)},\dots,\tau^{(n)} \right\}$ of an expert policy $\expertpolicy$ in $M$, and show how these can affect the regret bounds. In particular, we are interested in using $D$ to decrease the effective size of the state space $\mathcal{S}$, thereby reducing the amount of required exploration when learning in $M$.

\section{Incorporating e-stop interventions} \label{sec:interventions}
In the simplest setting, we have access to an external supervisor who provides minimal online feedback in the form of an e-stop device triggered whenever the agent visits states outside of some to-be-determined set $\hat{\mathcal{S}} \subseteq \mathcal{S}$.
For example, if a mobile robot is navigating across a cluttered room as in \cref{fig:overview}, a reasonable policy will rarely collide into objects or navigate into other rooms which are unrelated to the current task, and the supervisor may trigger the e-stop device if the robot exhibits either of those behaviors.
The analysis for other support types (e.g. state-action, time-dependent, visitation count) are similar, and their trade-offs are discussed in \cref{sec:support-types}.

\subsection{The sample complexity and asymptotic sub-optimality trade-off}
We argue that for many practical MDPs, $\rho_{\expertpolicy}$ is near-zero in much of the state space, and constructing an appropriate $\hat{\mathcal{S}}$ enables efficiently trading off asymptotic sub-optimality for potentially significantly improved convergence rate. Given some reinforcement learning algorithm $\mathfrak{A}$, we proceed by running $\mathfrak{A}$ on a newly constructed ``e-stop'' MDP $\widehat{M} = (\hat{\mathcal{S}}, \mathcal{A}, P_{\hat{\mathcal{S}}}, R_{\hat{\mathcal{S}}}, H, \rho_0)$. Intuitively, whenever the current policy leaves $\hat{\mathcal{S}}$, the e-stop prematurely terminates the current episode with no further reward (the maximum penalty). These new transition and reward functions are defined as
\begin{align}
\begin{split}
    P_{\hat{\mathcal{S}}}(s_t, a_t, s_{t+1}) &= 
        \begin{cases}
        P(s_t, a_t, s_{t+1}),& \text{if } s' \in \hat{\mathcal{S}}\\
        \sum_{s' \not \in \hat{\mathcal{S}}} P(s_t, a_t, s'),              & s_{t+1} = s_\textup{term} \\
        0, & \textup{else} \\ 
        \end{cases} \\
    R_{\hat{\mathcal{S}}}(s_t, a_t, s_{t+1}) &= 
    \begin{cases}
    R(s_t, a_t, s_{t+1}),& \text{if } s_{t+1} \in \hat{\mathcal{S}}\\
    0,              & \text{else}
    \end{cases}
\end{split}
\label{eq:modified-mdp}
\end{align}
where $s_\textup{term}$ is an absorbing state with no reward.
A similar idea was discussed for the imitation learning problem in \cite{Ross2013}.

The key trade-off we attempt to balance is between the \emph{asymptotic sub-optimality} and \emph{reinforcement learning regret},
\begin{equation} \label{eq:tradeoff}
    \textup{Regret}(T) \leq 
    \underbrace{\ceil*{\tfrac{T}{H}}\left[J(\pi^*) - J(\hat \pi^*\right)]}_\text{Asymptotic sub-optimality}
    + \underbrace{\mathbb{E}_{\widehat M}^{\hat \pi^*}\left[ R_T \right] - \mathbb{E}_{\widehat M}^\mathfrak{A}\left[ R_T \right].}_\text{Learning regret}
\end{equation}
where $\pi^*$ and $\hat \pi^*$ are the optimal policies in $M$ and $\widehat{M}$, respectively (proof in \cref{app:proof:tradeoff}). The first term is due to the approximation error introduced when constructing $\widehat{M}$, and depends entirely on our choice of $\hat{\mathcal{S}}$. The second term  is the familiar reinforcement learning regret, e.g.\ \citet{Azar2017} recently proved an upper regret bound of $\sqrt{H|\mathcal{S}||\mathcal{A}|T} + H^2|\mathcal{S}|^2|\mathcal{A}|$. We refer the reader to \citet{Kakade2018} for an overview of state-of-the-art regret bounds in episodic MDPs.


Our focus is primarily on the first term, which in turn decreases the learning regret of the second term via $|\hat{\mathcal{S}}|$ (typically quadratically). 
This forms the basis for the key performance trade-off.
We introduce bounds for the first term in various conditions, which inform our proposed methods.
By intelligently modifying $M$ through e-stop interventions, we can decrease the required exploration and allow for the early termination of uninformative, low-reward trajectories.
Note that the reinforcement learning complexity of $\mathfrak{A}$ is now independent of $\mathcal{S}$, and instead dependent on $\hat{\mathcal{S}}$ according to the same polynomial factors. Depending on the MDP and expert policy, this set may be significantly smaller than the full set. In return, we pay a worst case asymptotic sub-optimality penalty.



\subsection{Perfect e-stops} \label{sec:perfect}
To begin, consider an idealized setting, $\hat{\mathcal{S}} = \left\{ s | h(s) > 0 \right\}$ (or some superset thereof) where $h(s)$ is the probability $\expertpolicy$ visits state $s$ at any point during an episode.
Then the modified MDP $\widehat{M}$ has an optimal policy which achieves at least the same reward as $\expertpolicy$ on the true MDP $M$.
\begin{theorem} \label{thm:perfect}
\perfectthm
\end{theorem}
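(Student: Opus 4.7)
The plan is to sandwich $J(\pi_e)$ between two quantities by comparing the expert's performance to $\hat{\pi}^*$ in a common MDP, namely $\widehat{M}$. The argument breaks into three natural steps, chained together at the end.

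First, I would argue that the expert $\pi_e$ behaves identically in $M$ and in $\widehat{M}$, and therefore $J_{\widehat{M}}(\pi_e) = J(\pi_e)$. The key observation is that, by the definition of $\hat{\mathcal{S}} = \{s : h(s) > 0\}$, every state visited by $\pi_e$ with positive probability already lies in $\hat{\mathcal{S}}$. Consequently, a roll-out of $\pi_e$ in $M$ never leaves $\hat{\mathcal{S}}$ almost surely, so it never triggers the e-stop; the modified transition law $P_{\hat{\mathcal{S}}}$ and reward $R_{\hat{\mathcal{S}}}$ agree with $P$ and $R$ along every realizable trajectory of $\pi_e$. One can make this formal by induction on the time step $t$: if $\rho_{\pi_e}^t$ is supported on $\hat{\mathcal{S}}$ in $M$, then the same holds in $\widehat{M}$, and the per-step expected rewards coincide.

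Second, since $\hat{\pi}^*$ is the optimal policy on $\widehat{M}$ and $\pi_e \in \Pi$ is a valid competitor, optimality gives $J_{\widehat{M}}(\hat{\pi}^*) \geq J_{\widehat{M}}(\pi_e)$.

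Third, I would show that $J(\hat{\pi}^*) \geq J_{\widehat{M}}(\hat{\pi}^*)$. This follows from the fact that rewards are non-negative (the codomain of $R$ is $[0,1]$) and that $\widehat{M}$ only \emph{removes} reward relative to $M$: whenever a trajectory of $\hat{\pi}^*$ exits $\hat{\mathcal{S}}$, it transitions to the zero-reward absorbing state $s_\textup{term}$ in $\widehat{M}$, whereas in $M$ it would continue collecting rewards in $[0,1]$. A trajectory-coupling argument (couple the two MDPs up to the first exit time $\tau$ from $\hat{\mathcal{S}}$; after $\tau$, the $M$-trajectory accumulates a non-negative tail while the $\widehat{M}$-trajectory accumulates zero) yields the inequality term by term.

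Chaining the three steps gives $J(\hat{\pi}^*) \geq J_{\widehat{M}}(\hat{\pi}^*) \geq J_{\widehat{M}}(\pi_e) = J(\pi_e)$, which is the claim. I do not expect any serious obstacle here; the only subtlety is the coupling/support argument in step one, where one must be careful that $h(s) > 0$ really captures \emph{every} reachable state of $\pi_e$, and hence that the e-stop is never triggered under $\pi_e$ with probability one.
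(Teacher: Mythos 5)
Your proposal is correct and follows essentially the same three-step argument as the paper: $J(\hat{\pi}^*) \geq J_{\widehat{M}}(\hat{\pi}^*)$ since $\widehat{M}$ only removes reward, $J_{\widehat{M}}(\hat{\pi}^*) \geq J_{\widehat{M}}(\expertpolicy)$ by optimality and realizability, and $J_{\widehat{M}}(\expertpolicy) = J(\expertpolicy)$ since $\expertpolicy$ never leaves $\hat{\mathcal{S}}$. You merely spell out the induction/coupling details that the paper leaves implicit.
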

In other words, and not surprisingly, if the expert policy never visits a state, then we pay no penalty for removing it. 
(Note that we could have equivalently selected $\hat{\mathcal{S}} = \left\{ s | \rho_{\expertpolicy}(s) > 0 \right\}$ since $\rho_{\expertpolicy}(s) > 0$ if and only $h(s) > 0$.)

\begin{algorithm}[t]
\caption{Resetting based on demonstrator trajectories}\label{alg:reset}
\begin{algorithmic}[1]
\Procedure{LearnedEStop}{$M, \mathfrak{A}, \expertpolicy, n, \xi$}
\State Rollout multiple trajectories from $\expertpolicy$: $D \gets [s^{(1)}_1, \dots, s^{(1)}_H], \dots, [s^{(n)}_1, \dots, s^{(n)}_H]$
\State Estimate the hitting probabilities: $\hat{h}(s) = \frac{1}{n}\sum_i \mathbb{I}\{ s \in\tau^{(i)} \}$. (Or $\hat{\rho}$ in continuous settings.)
\State Construct the smallest $\hat{\mathcal{S}}$ allowed by the $\sum_{s\in \mathcal{S}\setminus\hat{\mathcal{S}}} \hat{h}(s) \leq \xi$ constraint. (Or $\hat{\rho}(\mathcal{S}\setminus\hat{\mathcal{S}}) \leq \xi$.)
\State Add e-stops, resulting in a modified MDP, $\widehat{M}$, where $P_{\hat{\mathcal{S}}}(s, a, s'), R_{\hat{\mathcal{S}}}(s, a, s') \gets \cref{eq:modified-mdp}$ \\
\Return $\mathfrak{A}(\widehat{M})$
\EndProcedure
\end{algorithmic}
\end{algorithm}

\subsection{Imperfect e-stops} \label{sec:imperfect}
In a more realistic setting, consider what happens when we ``remove'' (i.e.\ $s \not \in \hat{\mathcal{S}}$) states as e-stops that have low but non-zero probability of visitation under $\expertpolicy$. This can happen by ``accident'' if the supervisor interventions are noisy or we incorrectly estimate the visitation probability to be zero. Alternatively, this can be done intentionally to trade off asymptotic performance for better sample complexity, in which case we remove states with known low but non-zero visitation probability.

\begin{theorem} \label{thm:imperfect}
\imperfectthm
\end{theorem}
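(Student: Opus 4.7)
The plan is to split the suboptimality gap using the optimality of $\hat{\pi}^*$ in $\widehat{M}$, so that the only quantity we actually need to bound is the performance loss the expert itself suffers when run under the e-stopped dynamics.

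First, observe that for any policy $\pi$ we have $J_{\widehat{M}}(\pi) \leq J_M(\pi) = J(\pi)$, because $\widehat{M}$ is identical to $M$ except that once a trajectory leaves $\hat{\mathcal{S}}$ it transitions to the zero-reward absorbing state $s_{\textup{term}}$, whereas in $M$ the trajectory continues and accrues reward in $[0,1]$ for the remaining time steps. Combined with the fact that $\hat{\pi}^*$ is optimal in $\widehat{M}$, so $J_{\widehat{M}}(\hat{\pi}^*) \geq J_{\widehat{M}}(\expertpolicy)$, we get
\begin{equation}
J(\expertpolicy) - J(\hat{\pi}^*) \;\leq\; J(\expertpolicy) - J_{\widehat{M}}(\hat{\pi}^*) \;\leq\; J(\expertpolicy) - J_{\widehat{M}}(\expertpolicy).
\end{equation}
This reduces the problem to bounding the expert's own performance gap between the two MDPs.

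Next, I would couple the roll-outs of $\expertpolicy$ in $M$ and $\widehat{M}$ by feeding both executions the same sequence of random bits: they produce identical trajectories up until (and if) the trajectory first enters $\mathcal{S} \setminus \hat{\mathcal{S}}$, at which point the $\widehat{M}$ roll-out diverts to $s_{\textup{term}}$. On roll-outs that never leave $\hat{\mathcal{S}}$, the accumulated rewards are identical, so those contribute zero to $J(\expertpolicy) - J_{\widehat{M}}(\expertpolicy)$. On roll-outs that do leave $\hat{\mathcal{S}}$, the per-episode reward difference is at most $H$ since rewards lie in $[0,1]$ and the horizon is $H$. Therefore
\begin{equation}
J(\expertpolicy) - J_{\widehat{M}}(\expertpolicy) \;\leq\; H \cdot \Pr\!\left[\expertpolicy \text{ visits some } s \in \mathcal{S}\setminus\hat{\mathcal{S}} \text{ during an episode}\right].
\end{equation}

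Finally, the probability on the right is exactly $\Pr[\bigcup_{s \notin \hat{\mathcal{S}}} \{\expertpolicy \text{ visits } s\}]$, which by a union bound is at most $\sum_{s \in \mathcal{S}\setminus\hat{\mathcal{S}}} h(s)$ by the definition of $h$. Chaining these inequalities gives the claimed bound, and the ``allowance'' corollary $J(\expertpolicy) - J(\hat{\pi}^*) \leq \xi H$ is immediate from the hypothesis $\sum_{s \in \mathcal{S}\setminus\hat{\mathcal{S}}} h(s) \leq \xi$. The only step requiring any care is justifying $J_{\widehat{M}}(\pi) \leq J(\pi)$ cleanly — one must use both the non-negativity of rewards and the fact that the absorbing state yields zero — and ensuring the coupling correctly identifies the trajectories whose cumulative rewards actually differ; the rest is a straightforward union bound.
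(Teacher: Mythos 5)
Your proposal is correct and follows essentially the same route as the paper: reduce to the expert's gap $J(\expertpolicy) - J_{\widehat{M}}(\expertpolicy)$ via $J_{\widehat{M}}(\hat{\pi}^*) \leq J(\hat{\pi}^*)$ and the optimality of $\hat{\pi}^*$ in $\widehat{M}$, bound that gap by $H$ times the probability that an expert roll-out leaves $\hat{\mathcal{S}}$, and finish with a union bound giving $\sum_{s\in\mathcal{S}\setminus\hat{\mathcal{S}}} h(s)$. Your coupling of the two executions is just a cleaner phrasing of the paper's explicit sum over trajectories weighted by $p(\tau)\mu(\tau)$, so the two arguments are the same in substance.
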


\begin{corollary} \label{cor:stationary}
\stationarythm
\end{corollary}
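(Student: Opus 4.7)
The plan is to derive this corollary directly from \cref{thm:imperfect} by bounding the hitting probability $h(s)$ in terms of the average state-visitation probability $\rho_{\expertpolicy}(s)$. \cref{thm:imperfect} already gives us the bound $J(\expertpolicy) - J(\hat{\pi}^*) \leq H \sum_{s \in \mathcal{S} \setminus \hat{\mathcal{S}}} h(s)$, so the only work required is to convert the sum over $h(s)$ into a sum over $\rho_{\expertpolicy}(s)$, picking up an extra factor of $H$ along the way.

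The key observation is a union-bound style inequality: the probability of visiting state $s$ \emph{at some time} during a length-$H$ roll-out is bounded by the sum of the per-timestep occupancy probabilities, i.e.
\begin{equation}
h(s) \;=\; \Pr\!\left[\exists t \in \{0,\dots,H-1\} : s_t = s\right] \;\leq\; \sum_{t=0}^{H-1} \rho_{\expertpolicy}^t(s) \;=\; H \cdot \rho_{\expertpolicy}(s),
\end{equation}
where the final equality uses the definition $\rho_{\expertpolicy}(s) = \tfrac{1}{H}\sum_{t=0}^{H-1}\rho_{\expertpolicy}^t(s)$. This step is really just Boole's inequality applied to the events $\{s_t = s\}$ for $t=0,\dots,H-1$.

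Plugging this into \cref{thm:imperfect} and summing over $s \in \mathcal{S} \setminus \hat{\mathcal{S}}$ gives
\begin{equation}
J(\expertpolicy) - J(\hat{\pi}^*) \;\leq\; H \sum_{s \in \mathcal{S} \setminus \hat{\mathcal{S}}} h(s) \;\leq\; H \cdot H \sum_{s \in \mathcal{S} \setminus \hat{\mathcal{S}}} \rho_{\expertpolicy}(s) \;=\; \rho_{\expertpolicy}(\mathcal{S} \setminus \hat{\mathcal{S}}) \, H^2,
\end{equation}
which is exactly the desired statement.

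There is essentially no obstacle here; the result is a one-line consequence of \cref{thm:imperfect} together with the union bound. The only subtlety worth flagging in the write-up is making it clear why $h(s) \leq H \rho_{\expertpolicy}(s)$ rather than an equality — states visited at multiple timesteps are overcounted on the right-hand side — and this is precisely why the bound is loose by a factor of $H$ relative to the hitting-probability form. This looseness is the price of phrasing the guarantee in terms of the more easily estimated stationary quantity $\rho_{\expertpolicy}$ rather than the hitting probability $h$.
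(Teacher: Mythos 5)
Your proposal is correct and follows exactly the paper's argument: a union bound over time steps gives $h(s) \leq \sum_{t=0}^{H-1}\rho_{\expertpolicy}^t(s) = H\rho_{\expertpolicy}(s)$, and substituting into \cref{thm:imperfect} yields the $H^2$ bound. No differences worth noting.
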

In other words, removing states with non-zero hitting probability introduces error into the policy $\hat \pi^*$ according to the visitation probabilities $h$.

\begin{remark}
The primary slack in these bounds is due to upper bounding the expected cumulative reward for a given state trajectory by $H$. Although this bound is necessary in the worst case, it's worth noting that performance is much stronger in practice. In non-adverserial settings the expected cumulative reward of a state sequence, $\tau$, is correlated with the visitation probabilities of the states along its path: very low reward trajectories tend to have low visitation probabilities, assuming sensible expert policies. We opted against making any assumptions about the correlation between $h(s)$ and the value function, $V(s)$, so this remains an interesting option for future work.
\end{remark}

\section{Learning from observation} \label{sec:learned-estop}
In the previous section, we considered how to incorporate general e-stop interventions -- which could take the form of an expert supervisor or some other learned e-stop device. Here, we propose and analyze a method for building such a learned e-stop trigger using state observations from an expert demonstrator. This is especially relevant for domains where action observations are unavailable (e.g.\ videos).

Consider the setting where we observe $n$ roll-outs $\tau^{(1)},\dots,\tau^{(n)}$ of a demonstrator policy $\expertpolicy$ in $M$. We can estimate the hitting probability $h(s)$ empirically as $\hat{h}(s) = \frac{1}{n}\sum_i \mathbb{I}\{ s \in\tau^{(i)} \}$. Next, \cref{thm:imperfect} suggests constructing $\hat{\mathcal{S}}$ by removing states from $\mathcal{S}$ with the lowest $\hat{h}(s)$ values as long as is allowed by the $\sum_{s\in \mathcal{S}\setminus\hat{\mathcal{S}}} \hat{h}(s) \leq \xi$ constraint. In other words, we should attempt to remove as many states as possible while considering our ``budget'' $\xi$. The algorithm is summarized in \cref{alg:reset}. In practice, implementing \cref{alg:reset} is actually even simpler: pick $\hat{\mathcal{S}}$, take any off-the-shelf implementation and simply end training roll-outs whenever the state leaves $\hat{\mathcal{S}}$.
\clearpage
\begin{theorem} \label{thm:empirical-epsilon-estop}
\empiricalthm
\end{theorem}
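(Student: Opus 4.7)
The plan is to combine the deterministic bound from \cref{thm:imperfect} with a uniform concentration argument that lets us replace $\sum_{s \in \mathcal{S} \setminus \hat{\mathcal{S}}} \hat h(s)$ by $\sum_{s \in \mathcal{S} \setminus \hat{\mathcal{S}}} h(s)$. \cref{thm:imperfect} already gives $J(\expertpolicy) - J(\hat \pi^*) \leq H \sum_{s \in \mathcal{S} \setminus \hat{\mathcal{S}}} h(s)$ for the \emph{true} hitting probabilities, but the algorithm only controls the \emph{empirical} mass via the constraint $\sum_{s \in \mathcal{S}\setminus\hat{\mathcal{S}}} \hat h(s) \leq \xi$. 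So I need to argue that the empirical mass on the removed set is close to the true mass, uniformly over states.

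First, I observe that for each fixed $s \in \mathcal{S}$, the estimator $\hat h(s) = \tfrac{1}{n} \sum_{i=1}^n \mathbb{I}\{s \in \tau^{(i)}\}$ is the empirical mean of $n$ i.i.d.\ Bernoulli$(h(s))$ random variables, since the rollouts $\tau^{(1)}, \dots, \tau^{(n)}$ are drawn independently from $\expertpolicy$ in $M$. (The indicators within a single trajectory are correlated, but we only need independence \emph{across} trajectories, which holds.) Hoeffding's inequality then yields, for any $\delta > 0$,
\begin{equation}
\Pr\bigl[\, |\hat h(s) - h(s)| > \delta \,\bigr] \leq 2 e^{-2 \delta^2 n}.
\end{equation}
Setting $\delta = \epsilon / |\mathcal{S}|$ and taking a union bound over all $s \in \mathcal{S}$ gives that, with probability at least $1 - |\mathcal{S}| e^{-2 \epsilon^2 n / |\mathcal{S}|^2}$ (up to the constant factor of $2$, which is absorbed), $|\hat h(s) - h(s)| \leq \epsilon / |\mathcal{S}|$ holds \emph{simultaneously} for every state $s$.

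On this good event, I would chain the bounds: since there are at most $|\mathcal{S} \setminus \hat{\mathcal{S}}| \leq |\mathcal{S}|$ removed states,
\begin{equation}
\sum_{s \in \mathcal{S} \setminus \hat{\mathcal{S}}} h(s) \;\leq\; \sum_{s \in \mathcal{S} \setminus \hat{\mathcal{S}}} \hat h(s) \;+\; \sum_{s \in \mathcal{S} \setminus \hat{\mathcal{S}}} |h(s) - \hat h(s)| \;\leq\; \xi + |\mathcal{S} \setminus \hat{\mathcal{S}}| \cdot \tfrac{\epsilon}{|\mathcal{S}|} \;\leq\; \xi + \epsilon,
\end{equation}
where the first $\xi$ comes from the algorithm's construction of $\hat{\mathcal{S}}$. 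Plugging this into the conclusion of \cref{thm:imperfect} yields $J(\expertpolicy) - J(\hat \pi^*) \leq H(\xi + \epsilon)$, which is the claim.

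There is no deep obstacle; the only point requiring a little care is justifying that $\hat h(s)$ is a true empirical mean of independent Bernoullis (so that Hoeffding applies without a martingale or mixing argument), and choosing the right per-state slack $\epsilon / |\mathcal{S}|$ so that the union bound over $|\mathcal{S}|$ events matches the stated failure probability while the aggregate deviation on $\mathcal{S} \setminus \hat{\mathcal{S}}$ collapses to $\epsilon$. Everything else is a direct appeal to the already-proved \cref{thm:imperfect}.
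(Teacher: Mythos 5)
Your proposal is correct and follows essentially the same route as the paper: Hoeffding's inequality with per-state slack $\epsilon/|\mathcal{S}|$, a union bound over all states, and then chaining the resulting bound $\sum_{s \in \mathcal{S}\setminus\hat{\mathcal{S}}} h(s) \leq \xi + \epsilon$ into \cref{thm:imperfect}. The only minor looseness is that you invoke the two-sided Hoeffding bound, which costs a factor of $2$ in the failure probability that cannot simply be ``absorbed''; since you only need to rule out $\hat h(s)$ underestimating $h(s)$, the one-sided bound suffices and recovers the stated constant exactly, as in the paper.
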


As expected, there exists a tradeoff between the number of trajectories collected, $n$, the state removal allowance, $\xi$, and the asymptotic sub-optimality gap, $J(\expertpolicy) - J(\hat{\pi}^*)$. In practice we find performance to be fairly robust to $n$, as well as the quality of the expert policy. See \cref{sec:discrete-experiments} for experimental results measuring the impact of each of these variables.

Note that although this analysis only applies to the discrete setting, the same method can be extended to the continuous case by estimating and thresholding on $\rho_{\expertpolicy}(s)$ in place of $h(s)$, as implied by \cref{cor:stationary}. In \cref{sec:continuous_experiments} we provide empirical results in continuous domains. We also present a bound similar to \cref{thm:empirical-epsilon-estop} based on $\hat{\rho}_{\expertpolicy}(s)$ instead of $\hat{h}(s)$ in the appendix (\cref{thm:empirical-bernstein}), although unfortunately it retains a dependence on $|\mathcal{S}|$. 

\section{Empirical study} \label{sec:experiments}

\subsection{Discrete environments} \label{sec:discrete-experiments}

\begin{figure}[t]
    \centering
    \includegraphics[width=0.32\textwidth]{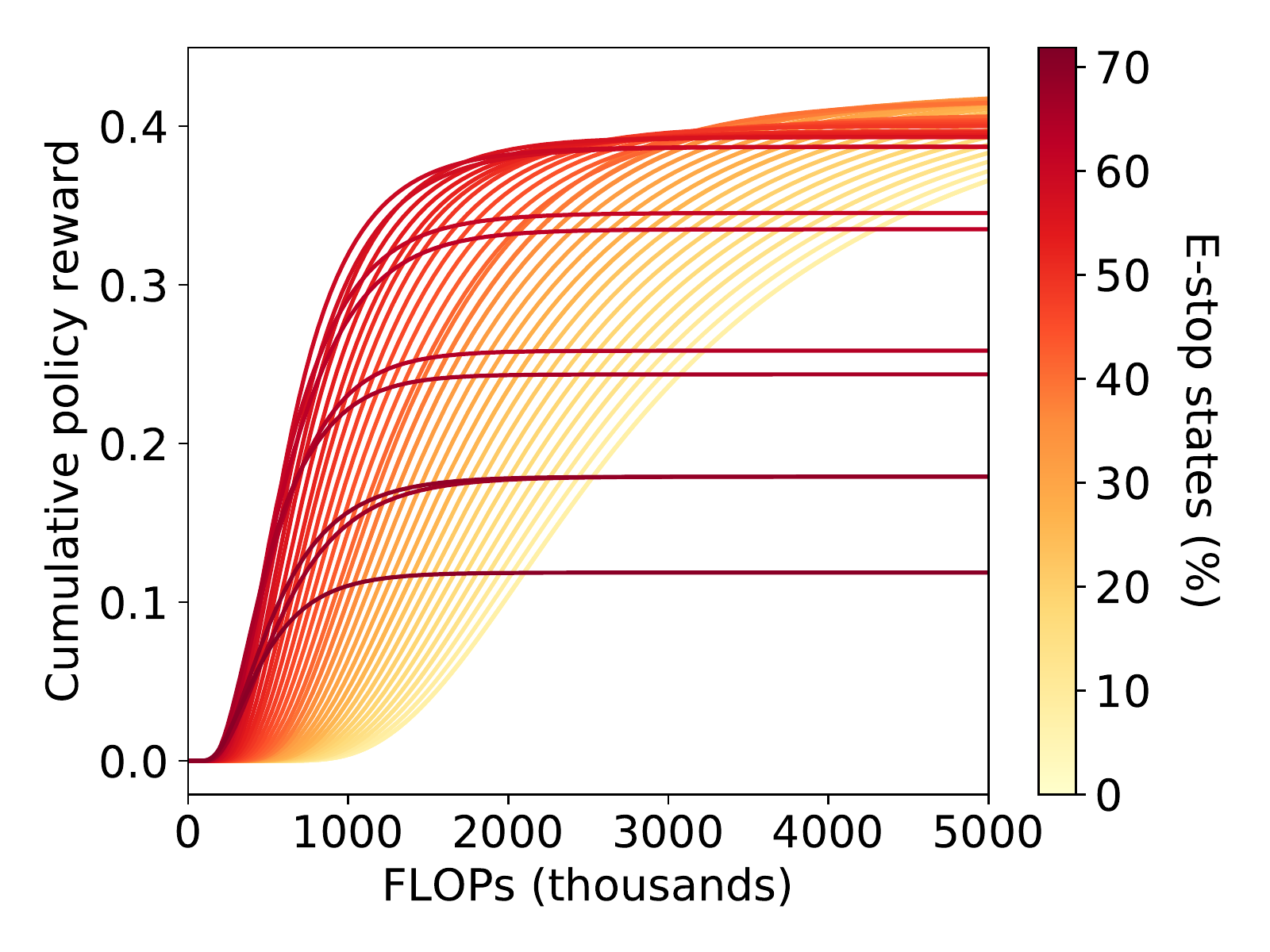}
    \includegraphics[width=0.32\textwidth]{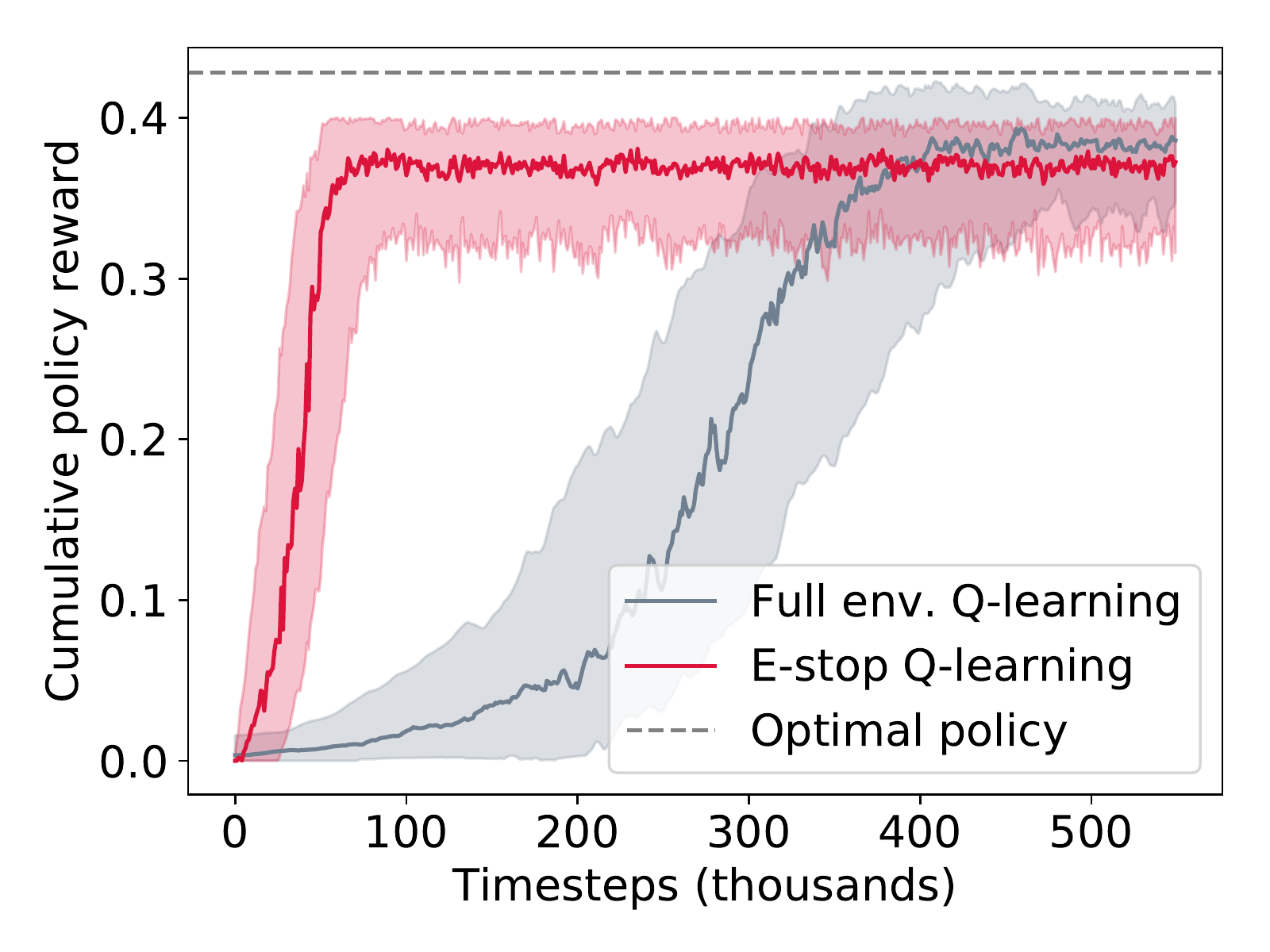}
    \includegraphics[width=0.32\textwidth]{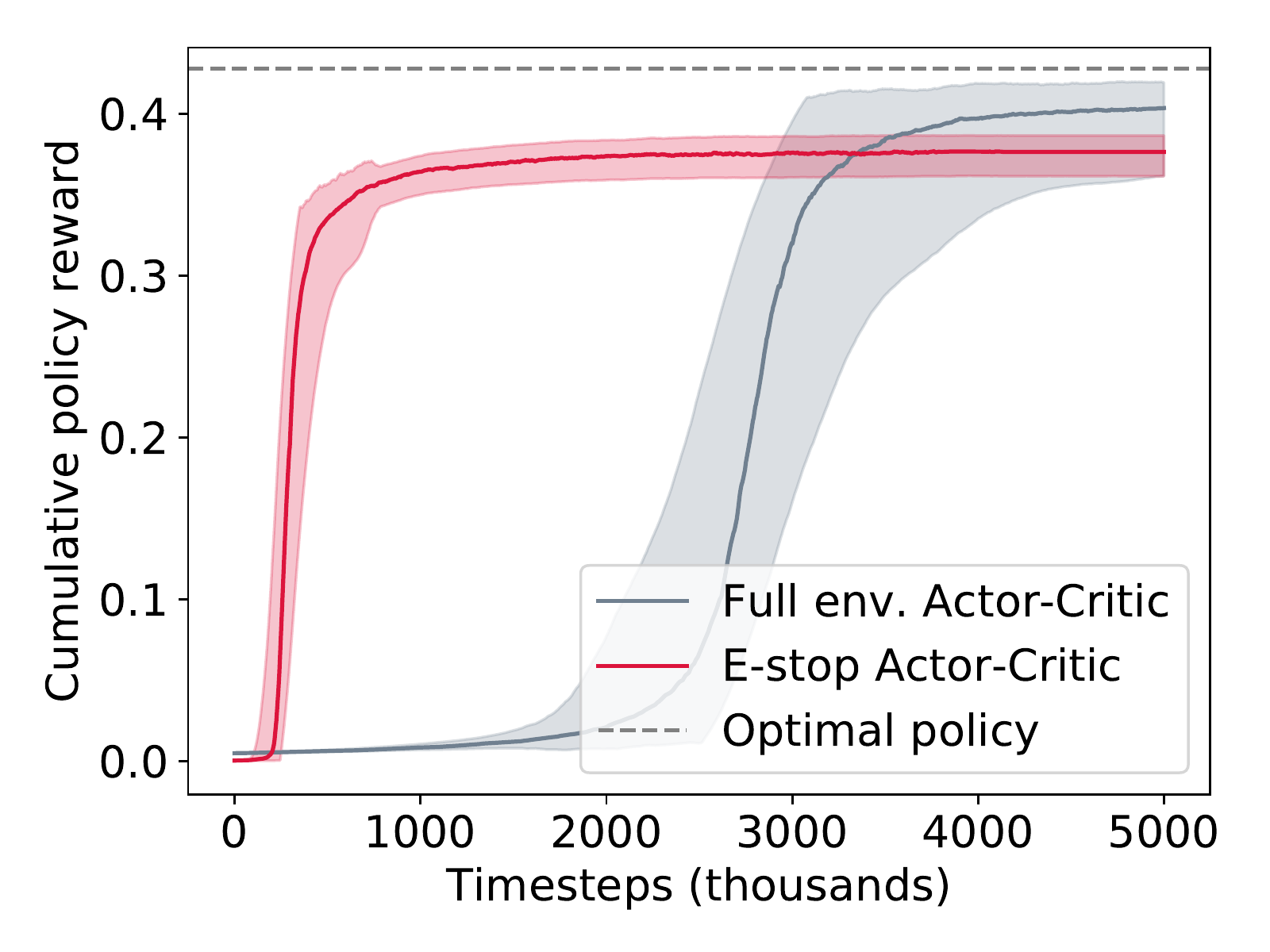}
    \caption{\textit{Left:} Value iteration results with varying portions of the state space replaced with e-stops. Color denotes the portion of states that have been replaced. Note that significant performance improvements may be realized before the optimal policy reward is meaningfully affected. \textit{Middle:} Q-learning results with and without the e-stop mechanism. \textit{Right:} Actor-critic results with and without the e-stop mechanism. Both plots show results across 100 trials. We observe that e-stopping produces drastic improvements in sample efficiency while introducing only a small sub-optimality gap.}
    \label{fig:frozenlake_results}
\end{figure}

We evaluate \textsc{LearnedEStop} on a modified FrozenLake-v0 environment from the OpenAI gym. This environment is highly stochastic: for example, taking a left action can move the character either up, left, or down each with probability $1/3$. To illustrate our ability to evade states that are low-value but non-terminating, we additionally allow the agent to ``escape'' the holes in the map and follow the usual dynamics with probability 0.01. As in the original problem, the goal state is terminal and the agent receives a reward of $1$ upon reaching the goal and $0$ elsewhere. To encourage the agent to reach the goal quickly, we use a discount factor of $\gamma=0.99$.

Across all of our experiments, we observe that algorithms modified with our e-stop mechanism are far more sample efficient thanks to our ability to abandon episodes that do not match the behavior of the expert. We witnessed these benefits across both planning and reinforcement learning algorithms, and with both tabular and policy gradient-based techniques.

Although replacing states with e-stops introduces a small sub-optimality gap, practical users need not despair: any policy trained in a constrained e-stop environment is portable to the full environment. Therefore using e-stops to warm-start learning on the full environment may provide a ``best of both worlds'' scenario. Annealing this process could also have a comparable effect.

\textbf{Value iteration.} To elucidate the relationship between the size of the support set, $\hat{S}$, and the sub-optimality gap, $J(\expertpolicy)-J(\hat{\pi}^*)$, we run value iteration on e-stop environments with progressively more e-stop states. First, the optimal policy with respect to the full environment is computed and treated as the expert policy, $\expertpolicy$. Next, we calculate $\rho_{\expertpolicy}(s)$ for all states. By progressively thresholding on $\rho_{\expertpolicy}(s)$ we produce sparser and sparser e-stop variants of the original environment. The results of value iteration run on each of these variants is shown in \cref{fig:frozenlake_results} (left). Lines are colored according to the portion of states removed from the original environment, darker lines indicating more aggressive pruning. As expected we observe a tradeoff: decreasing the size of $\hat{S}$ introduces sub-optimality but speeds up convergence. Once pruning becomes too aggressive we see that it begins to remove states crucial to reaching the goal and $J(\expertpolicy)-J(\hat{\pi}^*)$ is more severely impacted as a result.



\textbf{RL results.} To evaluate the potential of e-stops for accelerating reinforcement learning methods we ran \textsc{LearnedEStop} from \cref{alg:reset} with the optimal policy as $\expertpolicy$. Half of the states with the lowest hitting probabilities were replaced with e-stops. Finally, we ran classic RL algorithms on the resulting e-stop MDP. \cref{fig:frozenlake_results} (middle) presents our Q-learning results, demonstrating removing half of the states has a minor effect on asymptotic performance but dramatically improves the convergence rate. 
We also found the e-stop technique to be an effective means of accelerating policy gradient methods. \cref{fig:frozenlake_results} (right) presents results using one-step actor-critic with a tabular, value function critic \cite{Sutton2018}. In both cases, we witnessed drastic speedups with the use of e-stops relative to running on the full environment.

\textbf{Expert sub-optimality.} The bounds presented \cref{sec:imperfect} are all in terms of $J(\expertpolicy) - J(\hat{\pi}^*)$, prompting the question: To what extent is e-stop performance dependent on the quality of the expert, $J(\expertpolicy)$? Is it possible to exceed the performance of $\expertpolicy$ as in \cref{thm:perfect}, even with "imperfect" e-stops? To address these questions we artificially created sub-optimal policies by adding noise to the optimal policy's $Q$-function. Next, we used these sub-optimal policies to construct e-stop MDPs, and calculated $J(\hat{\pi}^*)$. As shown in \cref{fig:frozenlake_ablation_results} (left), e-stop performance is quite robust to the expert quality. Ultimately we only need to take care of capturing a ``good enough'' set of states in order for the e-stop policy to succeed.

\textbf{Estimation error.} The sole source of error in \cref{alg:reset} comes from the estimation of hitting probabilities via a finite set of $n$ expert roll-outs. \cref{thm:empirical-epsilon-estop} suggests that the probability of failure in empirically building an e-stop MDP decays exponentially in terms of the number of roll-outs, $n$. We test the relationship between $n$ and $J(\hat{\pi}^*)$ experimentally in \cref{fig:frozenlake_ablation_results} (right) and find that in this particular case it's possible to construct very good e-stop MDPs with as few as $10$ expert roll-outs.

\begin{figure}
    \centering
    \includegraphics[width=0.45\textwidth]{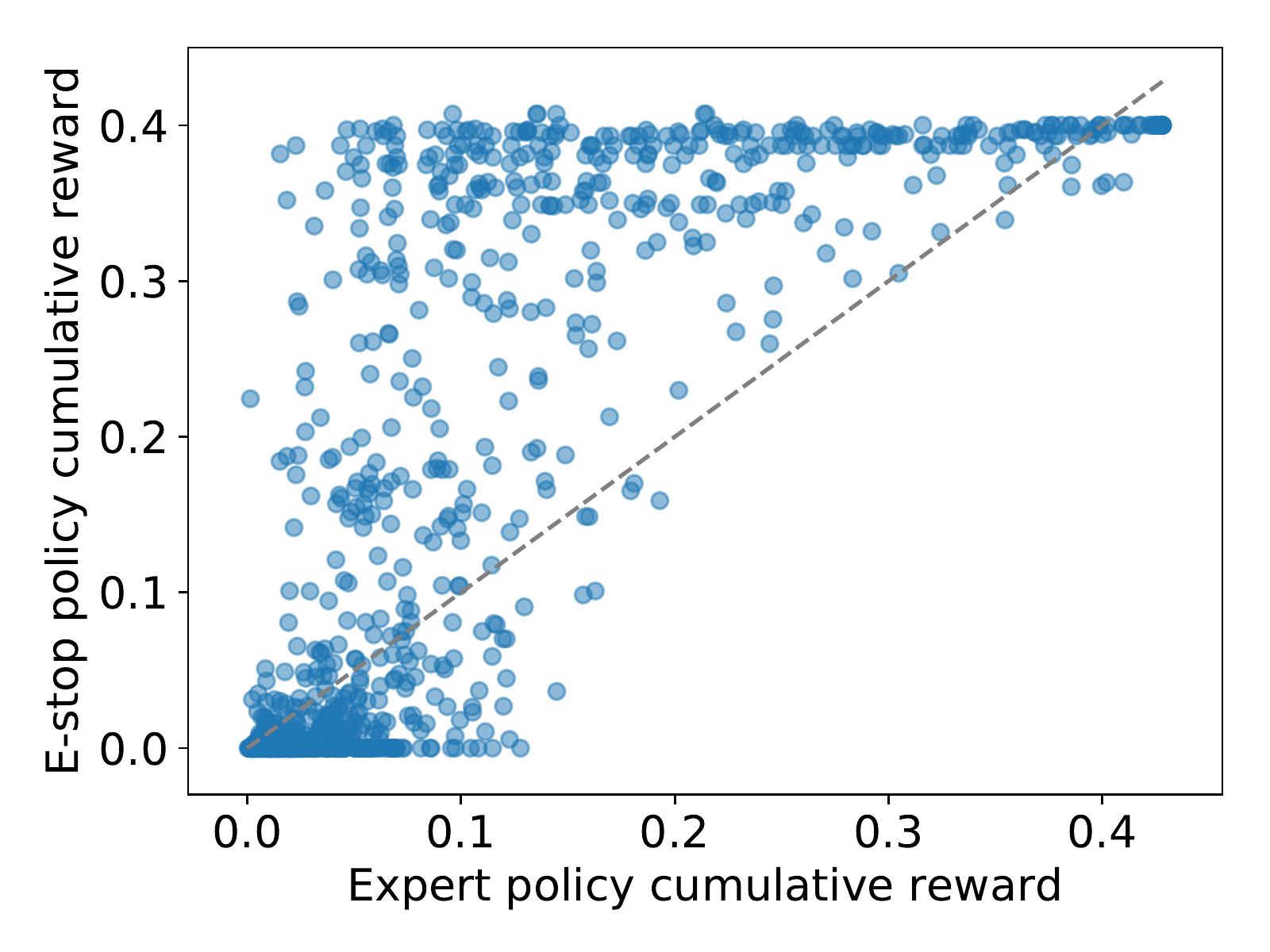}
    \includegraphics[width=0.45\textwidth]{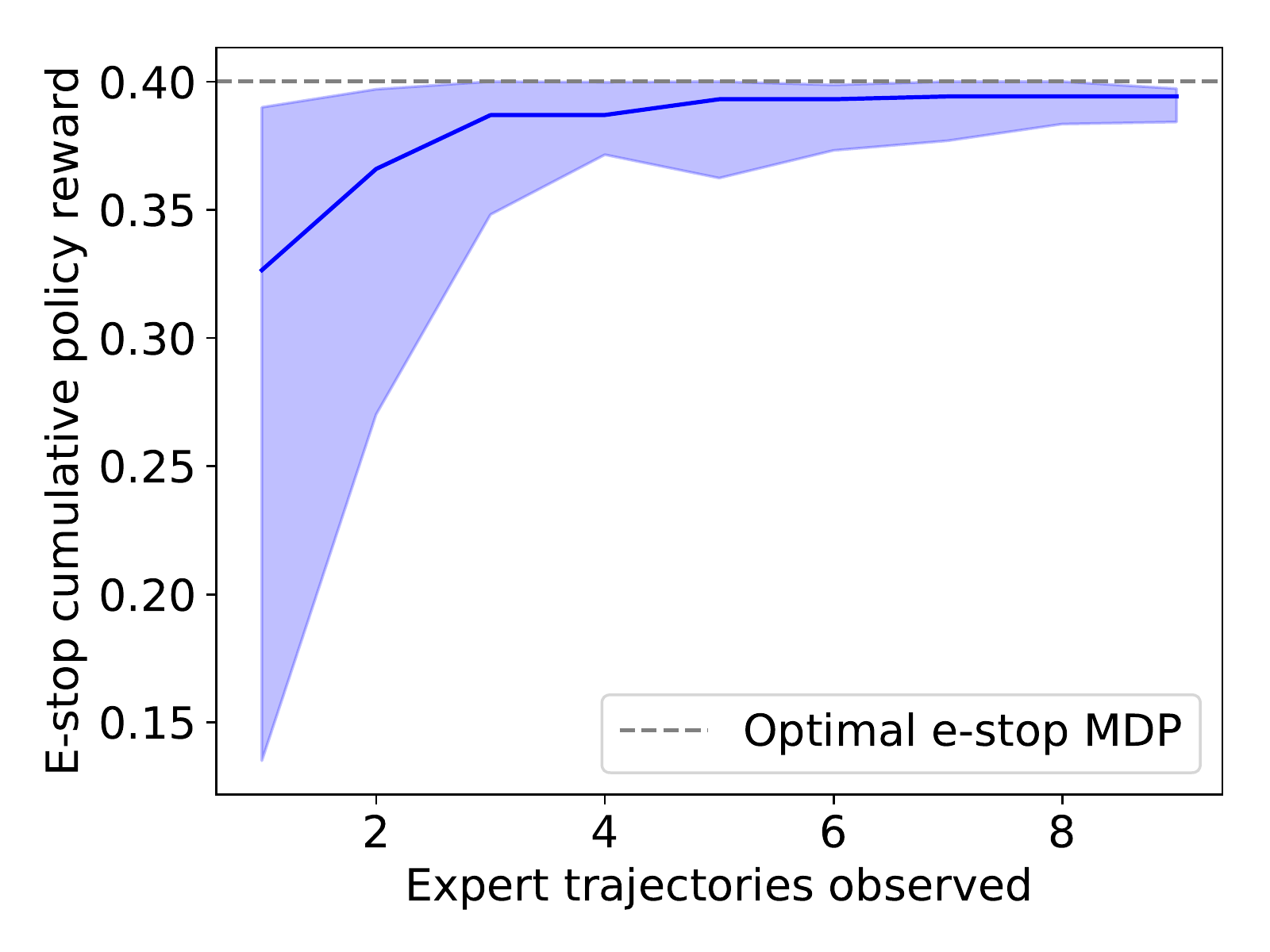}
    \caption{\textit{Left:} E-stop results based on sub-optimal expert policies. \textit{Right:} The number of expert trajectories used to construct $\hat{\mathcal{S}}$ vs the final performance in the e-stop environment.  E-stop results seem to be quite robust to poor experts and limited demonstrations.}
    \label{fig:frozenlake_ablation_results}
\end{figure}

\subsection{Continuous environments} \label{sec:continuous_experiments}
To experimentally evaluate the power of e-stops in continuous domains we took two classic continuous control problems: inverted pendulum control and the HalfCheetah-v3 environment from the OpenAI gym \cite{1606.01540}, and evaluated the performance of a deep reinforcement learning algorithm in the original environments as well as in modified versions of the environments with e-stops. 

Although e-stops are more amenable to analysis in discrete MDPs, nothing fundamentally limits them from being applied to continuous environments in a principled fashion. The notion of state-hitting probabilities, $h(s)$, is meaningless in continuous spaces but the stationary (infinite-horizon), or average state (finite-horizon) distribution, $\rho_{\expertpolicy}(s)$ is well-defined and many techniques exist for density estimation in continuous spaces. 
Applying these techniques along with \cref{cor:stationary} provides fairly solid theoretical grounds for using e-stops in continuous problems.

For the sake of simplicity we implemented e-stops as min/max bounds on state values. 
For each environment we trained a number of policies in the full environments, measured their performance, and calculated e-stop min/max bounds based on roll-outs of the resulting best policy.
We found that even an approach as simple as this can be surprisingly effective in terms of improving sample complexity and stabilizing the learning process.

\textbf{Inverted pendulum.} In this environment the agent is tasked with balancing an inverted pendulum from a random starting semi-upright position and velocity. The agent can apply rotational torque to the pendulum to control its movement. We found that without any intervention the agent would initially just spin the pendulum as quickly as possible and would only eventually learn to actually balance the pendulum appropriately. However, the e-stop version of the problem was not tempted with this strange behavior since the agent would quickly learn to keep the rotational velocity to reasonable levels and therefore converged far faster and more reliably as shown in \cref{fig:ddpg_results} (left).

\textbf{Half cheetah.} \cref{fig:ddpg_results} (right) shows results on the HalfCheetah-v3 environment. 
In this environment, the agent is tasked with controling a 2-dimensional cheetah model to run as quickly as possible.
Again, we see that the e-stop agents converged much more quickly and reliably to solutions that were meaningfully superior to DDPG policies trained on the full environment. We found that many policies without e-stop interventions ended up trapped in local minima, e.g. flipping the cheetah over and scooting instead of running. Because e-stops were able to eliminate these states altogether, policies trained in the e-stop regime consistently outperformed policies trained in the standard environment.

Broadly, we found training with e-stops to be far faster and more robust than without. In our experiments, we considered support sets $\hat{\mathcal{S}}$ to be axis-aligned boxes in the state space. It stands to reason that further gains could be squeezed out of this framework by estimating $\rho_{\expertpolicy}(s)$ more prudently and triggering e-stops whenever our estimation, $\hat{\rho}_{\expertpolicy}(s)$, falls below some threshold. In general, results will certainly be dependent on the structure of the support set used and the parameterization of state space, but our results suggest that there is promise in the tasteful application of e-stops to continuous RL problems.


\begin{figure}
    \centering
    \includegraphics[width=0.45\textwidth]{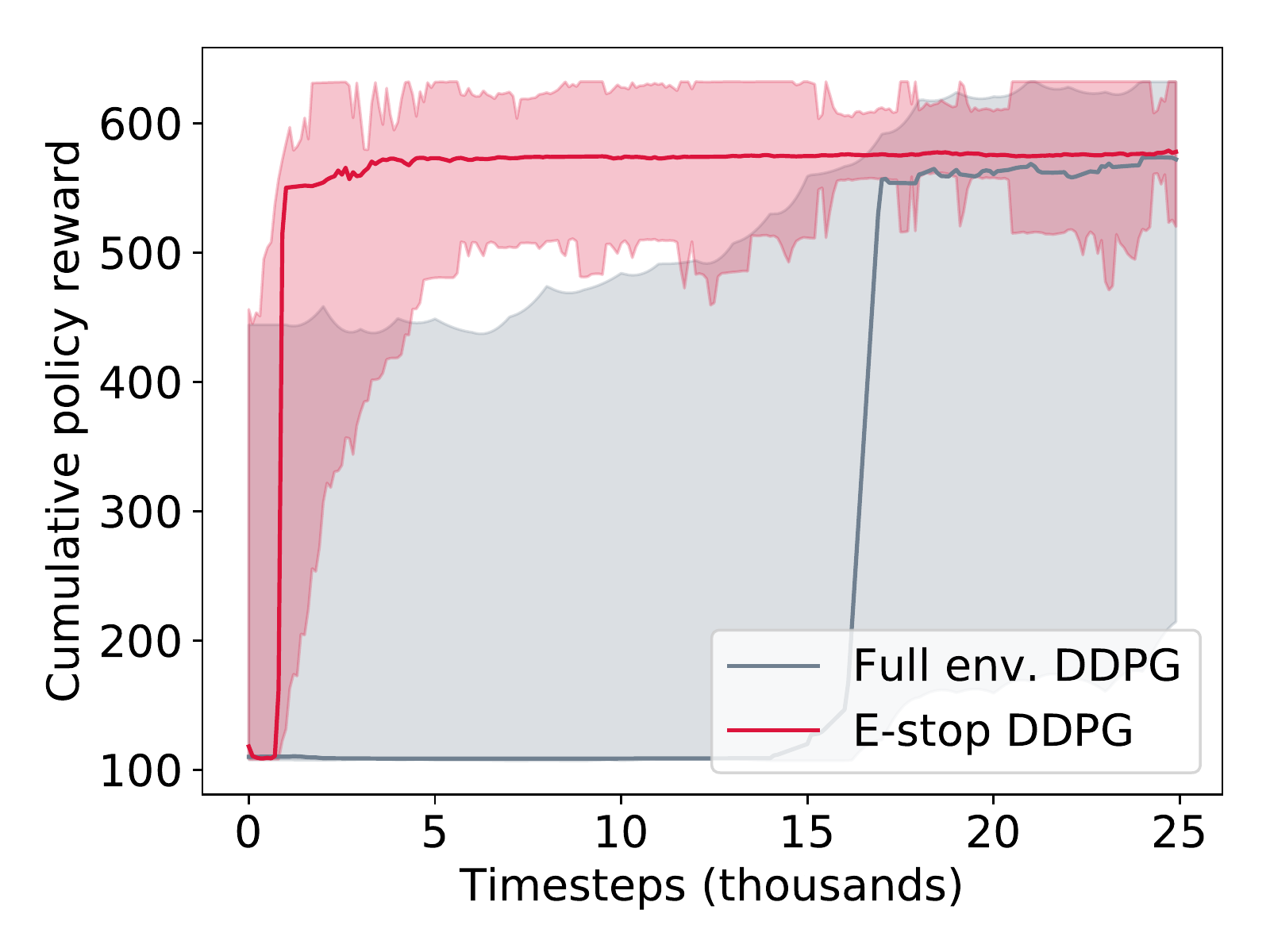}
    \includegraphics[width=0.45\textwidth]{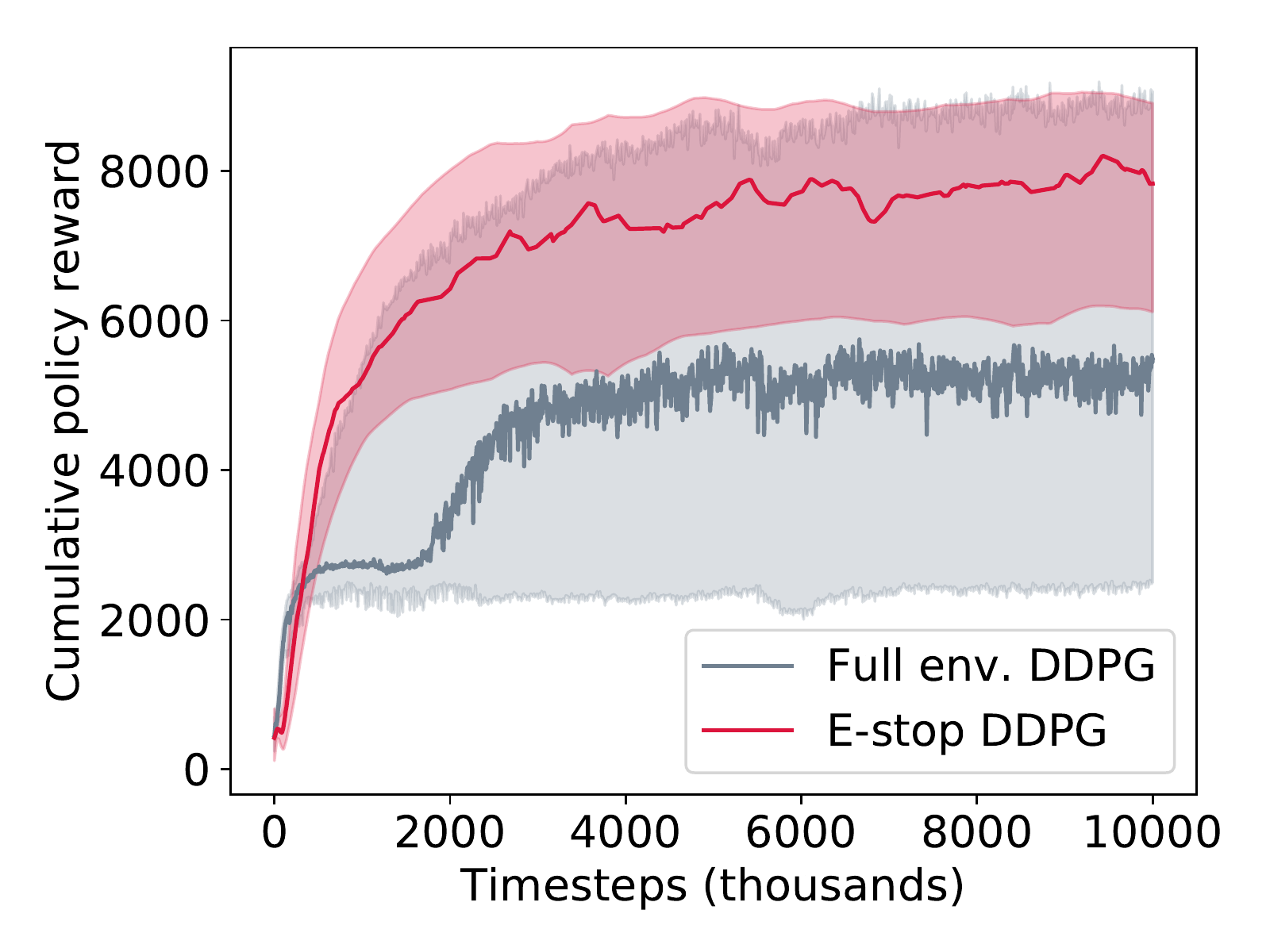}
    \caption{\textit{Left:} DDPG results on the pendulum environment. \textit{Right:} Results on the HalfCheetah-v3 environment from the OpenAI gym. All experiments were repeated with 48 different random seeds. Note that in both cases e-stop agents converged much more quickly and with lower variance than their full environment counterparts.}
    \label{fig:ddpg_results}
\end{figure}

    

\section{Types of support sets and their tradeoffs} \label{sec:support-types}
In the previous sections, we proposed reset mechanisms based on a continuous or discrete state support set $\hat{\mathcal{S}}$. In this section, we describe alternative support set constructions and their respective trade-offs.

At the most basic level, consider the sequence of sets $\mathcal{S}_{\expertpolicy}^1, \dotsc, \mathcal{S}_{\expertpolicy}^H$, defined by $\mathcal{S}_{\expertpolicy}^t = \{s | \rho_{\expertpolicy}^t > \epsilon\}$ for some small $\epsilon$. Note that the single set $\hat{\mathcal{S}}$ we considered in \cref{sec:perfect} is the union of these sets when $\epsilon=0$. The advantage of using a sequence of time-dependent support sets is that $\mathcal{S}_{\expertpolicy}$ may significantly over-support the expert's state distribution at any time and not reset when it is desirable to do so, i.e.\ $s_t \in \mathcal{S}_{\expertpolicy}$ but $s_t \not \in \mathcal{S}_{\expertpolicy}^t$ for some $t > 0$. The downside of using time-dependent sets is that it increases the memory complexity from $\mathcal{O}(|\mathcal{S}|)$ to $\mathcal{O}(|\mathcal{S}|H)$. Further, if the state distributions $\rho_{\expertpolicy}^1, \dotsc, \rho_{\expertpolicy}^H$ are similar, then using their union effectively increases the number of demonstrations by a factor of $H$.

To illustrate a practical scenario where it is advantageous to use time-dependent sets, we revisit the example in \cref{fig:overview}, where an agent navigates from a start state $s_0$. $\mathcal{S}_{\expertpolicy}$ does not prevent $\pi$ from remaining at $s_0$ for the duration of the episode, as $\expertpolicy$ is initialized at this state and thus $s_0 \in \mathcal{S}_{\expertpolicy}$. Clearly, this type of behavior is undesirable, as it does not move the agent towards its goal. However, the time-dependent sets would trigger an intervention after only a couple time steps, since $s_0 \in \mathcal{S}_{\expertpolicy}^0$ but $s_0 \in \mathcal{S}_{\expertpolicy}^t$ for some $t > 0$.

Finally, we propose an alternative support set based on visitation counts, which balances the trade-offs of the two previous constructions $\mathcal{S}_{\expertpolicy}$ and $\{\mathcal{S}_{\expertpolicy}^1, \dotsc, \mathcal{S}_{\expertpolicy}^H\}$. Let $s_f \in \mathbb{N}_0^{|\mathcal{S}|}$ be an auxiliary state, which denotes the number of visits to each state. Let $f(s) = \sum_{t=1}^H \mathbb{I}\{s \in \mathcal{S}_{\expertpolicy}^t\}$ be the visitation count to state $s$ by the demonstrator. The modified MDP in this setting is defined by
\begin{align}
\begin{split}
    P_{\mathcal{\hat S}}(s, s_f, a, s', s_f') &= 
        \begin{cases}
        P(s, a, s'),& \text{if } s_f \leq f(s), s_f' = s_f + e_s\\
        1,              & \text{if } s_f > f(s), s' = s_\textup{term}, s_f' = s_f + e_s  \\
        0 & \text{else}
        \end{cases} \\
    R_{\mathcal{\hat S}}(s, s_f, a, s') &= 
    \begin{cases}
    R(s, a, s'),& \text{if }  s_f \leq f(s)\\
    0,              & \text{else}
    \end{cases}
\end{split}
\label{eq:counts}
\end{align}
where $e_s$ is the one-hot vector for state $s$. In other words, we terminate the episode with no further reward whenever the agent visits a state more than the demonstrator. The mechanism in \cref{eq:counts} has memory requirements independent of $H$ yet fixes some of the over-support issues in $\mathcal{S}_{\expertpolicy}$.
The optimal policy in this MDP achieves at least as much cumulative reward as $\expertpolicy$ (by extending \cref{thm:perfect}) and can be extend to the imperfect e-stop setting in \cref{sec:imperfect}.

We leave exploration of these and other potential e-stop constructions to future work.

\section{Conclusions}
We introduced a general framework for incorporating e-stop interventions into any reinforcement learning algorithm, and proposed a method for learning such e-stop triggers from state-only observations. 
Our key insight is that only a small support set of states may be necessary to operate effectively towards some goal, and we contribute a set of bounds that relate the performance of an agent trained in this smaller support set to the performance of the expert policy.
Tuning the size of the support set allows us to efficiently trade off an asymptotic sub-optimality gap for significantly lower sample complexity.

Empirical results on discrete and continuous environments demonstrate significantly faster convergence on a variety of problems and only a small asymptotic sub-optimality gap, if any at all. We argue this trade-off is beneficial in problems where environment interactions are expensive, and we are less concerned with achieving no-regret guarantees as we are with small, finite sample performance. Further, such a trade-off may be beneficial during initial experimentation and for bootstrapping policies in larger state spaces. For example, we are particularly excited about graduated learning processes that could increase the size of the support set over time. 

In larger, high dimensional state spaces, it would be interesting and relatively straightforward to apply anomaly detectors such as one-class SVMs \cite{Laskey2016} or auto-encoders \cite{Richter2017} within our framework to implicitly construct the support set. Our bounds capture the reduction in exploration due to reducing the state space size, which could be further tightened by incorporating our a priori knowledge of $s_{\textup{term}}$ and the ability to terminate trajectories early.


\section{Acknowledgements}
The authors would like to thank The Notorious B.I.G. and Justin Fu for their contributions to music, pop culture, and our implementation of DDPG. We would also like to thank Kevin Jamieson for exposing us to empirical Bernstein bounds. This work was (partially) funded by the National Science Foundation TRIPODS+X:RES (\#A135918), National Institute of Health R01 (\#R01EB019335), National Science Foundation CPS (\#1544797), National Science Foundation NRI (\#1637748), the Office of Naval Research, the RCTA, Amazon, and Honda Research Institute USA.


\bibliographystyle{plainnat}
\bibliography{references}

\clearpage
\appendix
\onecolumn
\begin{center}
    \LARGE{Appendix for ``Mo$'$ States Mo$'$ Problems:\\ Emergency Stop Mechanisms from Observation''}
\end{center}

\section{Proof of \cref{eq:tradeoff}} \label{app:proof:tradeoff}

\begin{lemma*}
Let $M = \langle \mathcal{S}, \mathcal{A}, P, R, H, \rho_0 \rangle$ be a finite horizon, episodic Markov decision process, and $\widehat{M} = (\hat{\mathcal{S}}, \mathcal{A}, P_{\hat{\mathcal{S}}}, R_{\hat{\mathcal{S}}}, H, \rho_0)$ be a corresponding e-stop version of $M$. Given a reinforcement learning algorithm $\mathfrak{A}$, the regret in $M$ after running $\mathfrak{A}$ for $T$ timesteps in $\widehat{M}$ is bounded by
\begin{equation}
    \textup{Regret}(T) \leq 
    \ceil*{\tfrac{T}{H}}\left[J(\pi^*) - J(\hat \pi^*\right)] + \mathbb{E}_{\widehat M}^{\hat \pi^*}\left[ R_T \right] - \mathbb{E}_{\widehat M}^\mathfrak{A}\left[ R_T \right]
\end{equation}
where $\pi^*$ and $\hat \pi^*$ are the optimal policies in $M$ and $\widehat{M}$, respectively
\end{lemma*}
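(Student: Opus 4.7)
The plan is a standard two-term telescoping decomposition. I would insert the surrogate quantity $\mathbb{E}_{\widehat M}^{\hat\pi^*}[R_T]$ -- the expected cumulative reward of the $\widehat M$-optimal policy executed inside $\widehat M$ -- between the two quantities that appear in the regret, so that the regret cleanly splits into an approximation-gap piece (the cost of swapping $M$ for $\widehat M$) plus a learning-regret piece (the cost of $\mathfrak{A}$ underperforming $\hat\pi^*$ inside $\widehat M$).

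The first ingredient is a bookkeeping identity: in any MDP $M'$ and for any policy $\pi$, running $\pi$ for $T$ timesteps corresponds to $K=\lceil T/H\rceil$ episodes which independently reset from $\rho_0$ and each contribute $J_{M'}(\pi)$ in expectation, so $\mathbb{E}_{M'}^\pi[R_T] \leq K\cdot J_{M'}(\pi)$ (with equality when $T$ is a multiple of $H$, and the partial-episode slack absorbed by the ceiling using non-negativity of per-step rewards). Applying this to $(M,\pi^*)$ and to $(\widehat M,\hat\pi^*)$ pins down the two boundary terms we need. The decomposition then reads
\begin{equation*}
\textup{Regret}(T) = \bigl[\mathbb{E}_M^{\pi^*}[R_T] - \mathbb{E}_{\widehat M}^{\hat\pi^*}[R_T]\bigr] + \bigl[\mathbb{E}_{\widehat M}^{\hat\pi^*}[R_T] - \mathbb{E}_{\widehat M}^\mathfrak{A}[R_T]\bigr],
\end{equation*}
where the second bracket is already the claimed learning-regret term and the first bracket collapses via the identity to $\lceil T/H\rceil [J(\pi^*) - J(\hat\pi^*)]$, the claimed asymptotic sub-optimality.

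The main obstacle is essentially notational rather than mathematical: pinning down what ``the regret in $M$ after running $\mathfrak{A}$ for $T$ timesteps in $\widehat M$'' precisely denotes, and keeping the interpretation of $J(\hat\pi^*)$ consistent on both sides of the inequality. The cleanest reading evaluates $\hat\pi^*$ in its home MDP $\widehat M$, in which case the first bracket is in fact an equality. If one instead wishes to evaluate $\hat\pi^*$ in $M$ (as the downstream theorems do), one would additionally invoke the monotonicity $\mathbb{E}_{\widehat M}^\pi[R_T] \leq \mathbb{E}_M^\pi[R_T]$ -- which follows from non-negative rewards and the fact that $\widehat M$ merely truncates trajectories at an absorbing zero-reward state -- on the $\mathfrak{A}$ side of the decomposition to turn the identity into the stated ``$\leq$''. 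Beyond this bookkeeping, the proof is just linearity of expectation together with the episodic reset structure of $M$ and $\widehat M$.
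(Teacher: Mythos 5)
Your proposal follows essentially the same route as the paper's proof: both telescope through $\mathbb{E}_{\widehat M}^{\hat\pi^*}\left[R_T\right]$, dispose of the algorithm-side discrepancy between $\widehat M$ and $M$ via the truncation/monotonicity property of the e-stop MDP (the paper's third bracket, your closing remark), and convert the remaining boundary terms to per-episode values using $\lceil T/H\rceil$ together with $J(\expertpolicy)\leq J(\pi^*)$. It is correct at the same level of rigor as the paper itself, which glosses over exactly the partial-episode and $J(\hat\pi^*)$-interpretation bookkeeping you flag.
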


\begin{proof}
Let $\hat \pi^* = \argmax_{\pi \in \Pi} J_{\widehat{M}}(\pi)$. Then beginning with the external regret definition,
\begin{align}
    \textup{Regret}_M^\mathfrak{A}(T) &= \mathbb{E}_M^{\expertpolicy}\left[ R_T \right] - \mathbb{E}_M^\mathfrak{A}\left[ R_T \right] \\
    & = \left[\mathbb{E}_M^{\expertpolicy}\left[ R_T \right] - \mathbb{E}_{\widehat{M}}^{\hat \pi^*}\left[ R_T \right]\right]
    + \left[\mathbb{E}_{\widehat{M}}^{\hat \pi^*}\left[ R_T \right] - \mathbb{E}_{\widehat{M}}^\mathfrak{A}\left[ R_T \right]\right]
    + \left[\mathbb{E}_{\widehat{M}}^\mathfrak{A}\left[ R_T \right] - \mathbb{E}_M^\mathfrak{A}\left[ R_T \right]\right] \\
    &\leq \left[\mathbb{E}_M^{\expertpolicy}\left[ R_T \right] - \mathbb{E}_{\widehat{M}}^{\hat \pi^*}\left[ R_T \right]\right]
    + \left[\mathbb{E}_{\widehat{M}}^{\hat \pi^*}\left[ R_T \right] - \mathbb{E}_{\widehat{M}}^\mathfrak{A}\left[ R_T \right]\right] \label{eq:zero} \\
    &\leq \left[\mathbb{E}_M^{\expertpolicy}\left[ R_T \right] - \mathbb{E}_{\mathcal{S}}^{\hat \pi^*}\left[ R_T \right]\right]
    + \left[\mathbb{E}_{\widehat{M}}^{\hat \pi^*}\left[ R_T \right] - \mathbb{E}_{\widehat{M}}^\mathfrak{A}\left[ R_T \right]\right] \\
    &\leq \ceil*{\frac{T}{H}}\left[J(\pi^*) - J(\hat \pi^*)\right]
    + \left[\mathbb{E}_{\widehat{M}}^{\hat \pi^*}\left[ R_T \right] - \mathbb{E}_{\widehat{M}}^\mathfrak{A}\left[ R_T \right]\right] \\
\end{align}
\cref{eq:zero} follows by the definition of $\widehat{M}$.
\end{proof}

\section{Proof of \cref{thm:perfect}}
\begin{theorem*}
\perfectthm
\end{theorem*}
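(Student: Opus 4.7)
The plan is to sandwich $J(\hat{\pi}^*)$ between $J_{\widehat{M}}(\hat{\pi}^*)$ and $J(\expertpolicy)$ by showing (i) the expert achieves identical reward in $M$ and $\widehat{M}$, (ii) $\hat{\pi}^*$ beats $\expertpolicy$ in $\widehat{M}$ by optimality, and (iii) running any policy in $M$ can only do at least as well as in $\widehat{M}$.

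First I would establish that $J_{\widehat{M}}(\expertpolicy) = J(\expertpolicy)$. The key observation is that by definition of $\hat{\mathcal{S}}$, every state $s \notin \hat{\mathcal{S}}$ has $h(s) = 0$, so $\expertpolicy$ visits such states with probability zero along any rollout in $M$. In a finite MDP, a union bound over $\mathcal{S} \setminus \hat{\mathcal{S}}$ then shows that the entire trajectory of $\expertpolicy$ lies inside $\hat{\mathcal{S}}$ almost surely. On such trajectories the transition kernel $P_{\hat{\mathcal{S}}}$ and reward function $R_{\hat{\mathcal{S}}}$ in $\widehat{M}$ agree with $P$ and $R$ in $M$, so the induced trajectory-and-reward distributions coincide and the expected returns match.

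Next, by optimality of $\hat{\pi}^*$ in $\widehat{M}$ we immediately get
\begin{equation}
J_{\widehat{M}}(\hat{\pi}^*) \;\geq\; J_{\widehat{M}}(\expertpolicy) \;=\; J(\expertpolicy).
\end{equation}
Finally I would argue $J(\hat{\pi}^*) \geq J_{\widehat{M}}(\hat{\pi}^*)$. Couple the two executions of $\hat{\pi}^*$ in $M$ and in $\widehat{M}$ using the same random draws from $\rho_0$, $\pi$, and $P$. Up to the first time the agent leaves $\hat{\mathcal{S}}$ the two trajectories and per-step rewards are identical; from that point on $\widehat{M}$ yields reward $0$ while $M$ continues to accumulate rewards in $[0,1] \geq 0$. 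Taking expectations preserves the inequality, giving $J(\hat{\pi}^*) \geq J_{\widehat{M}}(\hat{\pi}^*) \geq J(\expertpolicy)$, which is the desired conclusion.

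The main obstacle is the first step: formally arguing that $\expertpolicy$ never escapes $\hat{\mathcal{S}}$, so that its behavior in $\widehat{M}$ is indistinguishable from its behavior in $M$. This is conceptually clear but requires being precise that ``never visited'' (the defining property of $h(s)=0$) propagates to ``never visited by any time step,'' which the union bound over the finite set $\mathcal{S}\setminus\hat{\mathcal{S}}$ handles cleanly. The coupling used in the last step is also worth stating explicitly to justify the pointwise reward comparison before taking expectations; both remaining steps are then routine.
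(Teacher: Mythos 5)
Your proposal is correct and follows essentially the same route as the paper: the expert achieves equal return in $\widehat{M}$ since it never leaves $\hat{\mathcal{S}}$, the optimal policy $\hat{\pi}^*$ dominates $\expertpolicy$ in $\widehat{M}$ by realizability, and $J(\pi) \geq J_{\widehat{M}}(\pi)$ for any policy since rewards are nonnegative. Your union-bound and coupling arguments simply fill in details that the paper states as immediate consequences of the definition of $\widehat{M}$.
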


\begin{proof}
Let $J_{\widehat{M}}(\pi)$ denote the value of executing policy $\pi$ in $M_{\hat{\mathcal{S}}}$. By the definition of $M_{\hat{\mathcal{S}}}$, 
\begin{gather} 
J(\pi) \geq J_{\widehat{M}}(\pi) \quad \forall \pi  \label{eq:discrete1} \\
J_{\widehat{M}}(\expertpolicy) = J(\expertpolicy)  \label{eq:discrete2}
\end{gather}
because $\expertpolicy$ never leaves $\mathcal{S}_{\expertpolicy}$ (and thus never leaves $\hat{\mathcal{S}}$). Finally, by the definition of $\hat \pi^*$ and the realizability of $\expertpolicy$,
\begin{equation} \label{eq:discrete3}
    J_{\widehat{M}}(\hat \pi^*) \geq J_{\widehat{M}}(\expertpolicy)
\end{equation} 
Combining \cref{eq:discrete1,eq:discrete2,eq:discrete3} implies $J(\hat \pi^*) > J(\expertpolicy)$.
\end{proof}

\section{Proof of \cref{thm:imperfect}}
\begin{theorem*}
\imperfectthm
\end{theorem*}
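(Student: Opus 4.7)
The plan is to reduce the bound to two ingredients: a sandwiching argument that upper-bounds $J(\expertpolicy)-J(\hat\pi^*)$ by $J(\expertpolicy)-J_{\widehat{M}}(\expertpolicy)$, and then a direct bound on that latter quantity using a union bound over states in $\mathcal{S}\setminus\hat{\mathcal{S}}$.

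First, I would reuse the observations from the proof of \cref{thm:perfect}. For any policy $\pi$, the construction in \cref{eq:modified-mdp} replaces any transition leaving $\hat{\mathcal{S}}$ by a jump to the zero-reward absorbing state $s_{\text{term}}$; since all rewards in $M$ are non-negative, this can only lose reward, so $J(\pi)\ge J_{\widehat{M}}(\pi)$ for every $\pi\in\Pi$. Specializing to $\hat\pi^*$ and combining with the optimality $J_{\widehat{M}}(\hat\pi^*)\ge J_{\widehat{M}}(\expertpolicy)$, I get the chain
\begin{equation}
J(\hat\pi^*) \;\ge\; J_{\widehat{M}}(\hat\pi^*) \;\ge\; J_{\widehat{M}}(\expertpolicy),
\end{equation}
which rearranges to $J(\expertpolicy)-J(\hat\pi^*) \le J(\expertpolicy)-J_{\widehat{M}}(\expertpolicy)$. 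This reduces the theorem to controlling how much the expert's value drops when we run it in the e-stop MDP instead of the true MDP.

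Second, I would bound $J(\expertpolicy)-J_{\widehat{M}}(\expertpolicy)$ by coupling trajectories. Roll out $\expertpolicy$ under the shared dynamics and let $E$ denote the event that the resulting trajectory visits some state in $\mathcal{S}\setminus\hat{\mathcal{S}}$ at least once during the episode. On the complement of $E$ the trajectory never encounters the e-stop, so its cumulative reward is identical in $M$ and $\widehat{M}$ and contributes zero to the difference. On $E$, the trajectories agree up to the first excursion and $\widehat{M}$ collects zero reward from that step onward, while $M$ continues to collect rewards in $[0,1]$; therefore the per-trajectory gap is at most $H$. Taking expectations gives
\begin{equation}
J(\expertpolicy)-J_{\widehat{M}}(\expertpolicy) \;\le\; H\cdot \Pr\nolimits_{\expertpolicy}(E).
\end{equation}

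Finally, a union bound converts $\Pr_{\expertpolicy}(E)$ into the stated sum: $E$ is the union over $s\in\mathcal{S}\setminus\hat{\mathcal{S}}$ of the events ``$\expertpolicy$ visits $s$,'' each of probability exactly $h(s)$ by definition, so $\Pr_{\expertpolicy}(E)\le \sum_{s\in\mathcal{S}\setminus\hat{\mathcal{S}}} h(s)$. Chaining this with the previous two inequalities yields $J(\expertpolicy)-J(\hat\pi^*)\le H\sum_{s\in\mathcal{S}\setminus\hat{\mathcal{S}}} h(s)$, and the ``allowance'' corollary is an immediate substitution. The only step with any subtlety is the per-trajectory $H$ gap on the event $E$: it is where the worst-case slack highlighted in the remark enters, since we are bounding the remaining sum of rewards after the first e-stop excursion by the crudest available quantity, $H\cdot 1$. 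Everything else is a two-line rearrangement plus a union bound.
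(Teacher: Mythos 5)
Your proposal is correct and follows essentially the same route as the paper's proof: the same reduction $J(\expertpolicy)-J(\hat\pi^*)\le J(\expertpolicy)-J_{\widehat{M}}(\expertpolicy)$ via optimality of $\hat\pi^*$ in $\widehat{M}$, the same worst-case bound of $H$ on the reward gap for trajectories that leave $\hat{\mathcal{S}}$, and the same union bound over $s\in\mathcal{S}\setminus\hat{\mathcal{S}}$ converting the escape probability into $\sum_s h(s)$. Your coupling phrasing is just a repackaging of the paper's explicit sum over trajectories, so no further changes are needed.
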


\begin{proof}
We proceed by analyzing the probabilities and expected rewards of entire trajectories $\tau = (\tau_1, \dots, \tau_H)$, in $M$ and $\widehat{M}$. Let
\begin{equation}
\mu(\tau) = \sum_{t=1}^{H-1} \mathbb{E}\left[ R(\tau_t, A_t, \tau_{t+1}) | \tau, \expertpolicy \right]
\end{equation}
be the expected reward of a trajectory $\tau$ and let $p_{M}(\tau)$ denote the probability of trajectory $\tau$ when following policy $\expertpolicy$ in MDP $M$. Note that 
\begin{equation}
    h(s) = \sum_\tau p_{S}(\tau) \mathbb{I}\{s\in\tau\}
\end{equation}
Now,
\begin{align}
J(\expertpolicy) - J(\hat{\pi}^*) &\leq J_M(\expertpolicy) - J_{\widehat{M}}(\hat{\pi}^*) \\
 &\leq J_M(\expertpolicy) - J_{\widehat{M}}(\expertpolicy) \\
&= \sum_\tau p_S(\tau) \mu(\tau) - \sum_\tau p_{\hat{S}}(\tau) \mu(\tau) \\
&\leq \sum_\tau p_S(\tau) \mu(\tau) \mathbb{I}\{\tau \text{ leaves } \hat{S}\} \\
&\leq H \sum_\tau p_S(\tau) \mathbb{I}\{\tau \text{ leaves } \hat{S}\} \label{eq:hslack} \\
&\leq H \sum_\tau p_S(\tau) \sum_{s\in S \setminus \hat{S}} \mathbb{I}\{s\in\tau\} \\
&= H \sum_{s\in S \setminus \hat{S}} \sum_\tau p_S(\tau)  \mathbb{I}\{s\in\tau\} \\
&= H \sum_{s\in S \setminus \hat{S}} h(s)
\end{align}
as desired.
\end{proof}

\section{Proof of \cref{cor:stationary}}
\begin{corollary*}
\stationarythm
\end{corollary*}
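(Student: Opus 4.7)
The plan is to obtain the corollary as a direct consequence of \cref{thm:imperfect} by upper-bounding the hitting probability $h(s)$ in terms of the average-state distribution $\rho_{\expertpolicy}(s)$ via a union bound over time steps. Concretely, \cref{thm:imperfect} already gives
\begin{equation*}
J(\expertpolicy) - J(\hat{\pi}^*) \;\leq\; H \sum_{s \in \mathcal{S} \setminus \hat{\mathcal{S}}} h(s),
\end{equation*}
so all that remains is to relate $h(s)$ to $\rho_{\expertpolicy}(s)$ and collect a factor of $H$.

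First I would unpack the hitting probability as $h(s) = \Pr_{\expertpolicy}\!\left[\,\exists\, t \in \{0,\dotsc,H-1\} : s_t = s \,\right]$. A standard union bound over the $H$ time steps gives
\begin{equation*}
h(s) \;\leq\; \sum_{t=0}^{H-1} \Pr_{\expertpolicy}[s_t = s] \;=\; \sum_{t=0}^{H-1} \rho_{\expertpolicy}^{\,t}(s) \;=\; H\,\rho_{\expertpolicy}(s),
\end{equation*}
where the last equality uses the definition $\rho_{\expertpolicy}(s) = \tfrac{1}{H}\sum_{t=0}^{H-1}\rho_{\expertpolicy}^{\,t}(s)$ recalled in the statement.

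Next, I would sum this inequality over the removed states, interchanging the finite sums freely, to obtain $\sum_{s\in\mathcal{S}\setminus\hat{\mathcal{S}}} h(s) \leq H \sum_{s\in\mathcal{S}\setminus\hat{\mathcal{S}}} \rho_{\expertpolicy}(s) = H\,\rho_{\expertpolicy}(\mathcal{S}\setminus\hat{\mathcal{S}})$. Substituting back into the bound from \cref{thm:imperfect} yields
\begin{equation*}
J(\expertpolicy) - J(\hat{\pi}^*) \;\leq\; H \cdot H\,\rho_{\expertpolicy}(\mathcal{S}\setminus\hat{\mathcal{S}}) \;=\; \rho_{\expertpolicy}(\mathcal{S}\setminus\hat{\mathcal{S}})\,H^2,
\end{equation*}
which is exactly the claimed inequality.

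There is no serious obstacle here; this is essentially a one-line reduction. The only mild subtlety worth flagging is that the union bound is tight only when the expert revisits removed states rarely, which is precisely where the extra factor of $H$ (relative to \cref{thm:imperfect}) enters. This slack is unavoidable in general since $\rho_{\expertpolicy}(s)$ counts visits with multiplicity while $h(s)$ only tracks whether $s$ is ever hit, and this is reflected in the $H^2$ rather than $H$ scaling of the corollary.
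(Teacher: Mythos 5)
Your proof is correct and matches the paper's argument exactly: both apply a union bound over the $H$ time steps to get $h(s) \leq \sum_{t=0}^{H-1}\rho_{\expertpolicy}^t(s) = H\rho_{\expertpolicy}(s)$ and then plug this into \cref{thm:imperfect} to obtain the $H^2$ bound. Nothing further is needed.
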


\begin{proof}
Note that 
\begin{equation}
    h(s) = \mathbb{P}\left( \bigcup_{t=0 }^{H-1} (s_t = s) \right) \leq \sum_{t=0}^{H-1} \rho_{\expertpolicy}^t (s) = H \rho_{\expertpolicy} (s)
\end{equation}
where the inequality follows from a union bound over time steps. Then
\begin{equation}
J(\expertpolicy) - J(\hat{\pi}^*) \leq \rho_{\expertpolicy}(S \setminus \hat{S}) H^2
\end{equation}
as a consequence of \cref{thm:imperfect}.
\end{proof}

\section{Proof of \cref{thm:empirical-epsilon-estop}}

\begin{theorem*}
\empiricalthm
\end{theorem*}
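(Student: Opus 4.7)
The plan is to chain Theorem \ref{thm:imperfect} with a concentration argument that converts the empirical constraint $\sum_{s \in \mathcal{S} \setminus \hat{\mathcal{S}}} \hat h(s) \leq \xi$ enforced by Algorithm \ref{alg:reset} into a high-probability bound on the corresponding true-probability sum $\sum_{s \in \mathcal{S} \setminus \hat{\mathcal{S}}} h(s)$. Once we have such a bound of the form $\sum_{s \in \mathcal{S} \setminus \hat{\mathcal{S}}} h(s) \leq \xi + \epsilon$, the result follows immediately by plugging into Theorem \ref{thm:imperfect} to get $J(\expertpolicy) - J(\hat{\pi}^*) \leq H(\xi + \epsilon)$.

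First I would observe that for each fixed state $s$, the estimator $\hat h(s) = \frac{1}{n}\sum_i \mathbb{I}\{s \in \tau^{(i)}\}$ is an empirical mean of $n$ independent Bernoulli indicators, each with mean $h(s)$ and range in $[0,1]$. A one-sided Hoeffding inequality then yields $\Pr(h(s) - \hat h(s) \geq t) \leq e^{-2nt^2}$ for every $t > 0$. Note we only need the one-sided direction, since Theorem \ref{thm:imperfect}'s bound depends on upper-bounding $h$, not on two-sided control of $|\hat h - h|$; this is what eliminates the factor of $2$ and matches the exponent in the stated failure probability.

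Next I would union-bound across all $|\mathcal{S}|$ states and choose the per-state slack $t = \epsilon/|\mathcal{S}|$, so that with probability at least $1 - |\mathcal{S}|\, e^{-2\epsilon^2 n / |\mathcal{S}|^2}$ we have $h(s) \leq \hat h(s) + \epsilon/|\mathcal{S}|$ simultaneously for every $s \in \mathcal{S}$. Summing this inequality over the removed states and using $|\mathcal{S} \setminus \hat{\mathcal{S}}| \leq |\mathcal{S}|$ together with the construction guarantee $\sum_{s \in \mathcal{S}\setminus\hat{\mathcal{S}}} \hat h(s) \leq \xi$ gives
\begin{equation}
\sum_{s \in \mathcal{S}\setminus\hat{\mathcal{S}}} h(s) \;\leq\; \sum_{s \in \mathcal{S}\setminus\hat{\mathcal{S}}} \hat h(s) + |\mathcal{S}\setminus\hat{\mathcal{S}}| \cdot \frac{\epsilon}{|\mathcal{S}|} \;\leq\; \xi + \epsilon.
\end{equation}
Feeding this into Theorem \ref{thm:imperfect} closes the argument.

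There is not a deep obstacle here; the only genuine choice is how to split the slack $\epsilon$ across the states in the union bound. A uniform split $t = \epsilon/|\mathcal{S}|$ is what produces the $|\mathcal{S}|^2$ factor in the exponent of the failure probability, and it is essentially forced by the fact that after summing the per-state deviations we must still land at total slack $\epsilon$. This quadratic dependence on $|\mathcal{S}|$ is the main price paid for relying on a crude (Hoeffding) per-state concentration plus a naive union bound, which is presumably why the authors mention a tighter empirical-Bernstein version (\cref{thm:empirical-bernstein}) in the appendix.
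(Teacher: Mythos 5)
Your proposal is correct and follows essentially the same route as the paper's proof: a one-sided Hoeffding bound per state, a union bound over all $|\mathcal{S}|$ states with per-state slack $\epsilon/|\mathcal{S}|$, summing over the removed states against the constraint $\sum_{s\in\mathcal{S}\setminus\hat{\mathcal{S}}}\hat h(s)\leq\xi$, and finally invoking \cref{thm:imperfect}. The only cosmetic difference is that the paper explicitly remarks that the $\hat h(s)$ are dependent across states (which the union bound tolerates), a point your argument handles implicitly and correctly.
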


\begin{proof}
With Hoeffding's inequality and a union bound,
\begin{align}
\mathbb{P}(\forall s, \hat{h}(s) > h(s) - \epsilon / |\mathcal{S}|) &= 1 - \mathbb{P}(\exists s, \hat{h}(s) \leq h(s) - \epsilon/ |\mathcal{S}|) \\
&\geq 1 - |\mathcal{S}| e^{-2\epsilon^2 n/ |\mathcal{S}|^2}
\end{align}
Note that the $\hat{h}(s)$ values are not independent yet the union bound still allows us to bound the probability that any of them deviate meaningfully from $h(s)$. Now if $\hat{h}(s) > h(s) - \epsilon / |\mathcal{S}|$ for all $s$, it follows that
\begin{equation}
\xi \geq \sum_{s\in \mathcal{S}\setminus\hat{\mathcal{S}}} h(s) - \frac{\epsilon}{|\mathcal{S}|} (|\mathcal{S}| - |\hat{\mathcal{S}}|) \geq \sum_{s\in \mathcal{S}\setminus\hat{\mathcal{S}}} h(s) - \epsilon
\end{equation}
and so $\sum_{s\in \mathcal{S}\setminus\hat{\mathcal{S}}} h(s) \leq \xi + \epsilon$. By \cref{thm:imperfect} we have that
\begin{equation}
J(\expertpolicy) - J(\hat{\pi}^*) \leq (\xi + \epsilon) H
\end{equation}
completing the proof.
\end{proof}

\section{Imperfect e-stops in terms of $\hat{\rho}_{\expertpolicy}(s)$}

\label{sec:empirical-bernstein}

While \cref{thm:empirical-epsilon-estop} provides an analysis for an approximate e-stopping algorithm, its reliance on hitting probabilities does not extend nicely to continuous domains. Here we present a result analogous to \cref{thm:empirical-epsilon-estop}, but using $\hat{\rho}_{\pi_e}(s)$ in place of $\hat{h}(s)$. Unfortunately, we are not able to escape a dependence on $|\mathcal{S}|$ with this approach however. Furthermore, we require that $\expertpolicy$ always runs to episode completion without hitting any terminal states, ie. the length of all $\expertpolicy$ roll-outs is $H$.

\begin{definition}
Let $\varrho(s)$ be a random variable denoting the average number of times $\expertpolicy$ visits state $s$,
\begin{equation}
    \varrho(s) \triangleq \frac{|\{ t \in 1,\dots, H | \tau_t = s \}|}{H}.
\end{equation}
\end{definition}

Note that with $n$ roll-outs, our approximate average state distribution is the same as the average of the $\varrho$'s: $$\hat{\rho}_{\expertpolicy}(s) \triangleq \frac{1}{nH} \sum_{i=1}^n \sum_{t=1}^H \mathbb{I}\{\tau_t^{(i)} = s \} = \frac{1}{n} \sum_{i=1}^n \varrho^{(i)} (s).$$

\begin{theorem}
\label{thm:empirical-bernstein}
The e-stop MDP $\widehat{M}$ with states $\hat{\mathcal{S}}$ resulting from running the $\hat{\rho}_{\expertpolicy}$ version of \cref{alg:reset} with $n$ expert roll-outs has asymptotic sub-optimality
\begin{equation}
J(\expertpolicy) - J(\hat{\pi}^*) \leq \left(\xi + \sqrt{\frac{2 \log(2|\mathcal{S}|/\delta)}{n}} \sum_{s\in\mathcal{S}} \sqrt{V_n(\varrho^{(1:n)}(s))} + \frac{7 |\mathcal{S}| \log(2|\mathcal{S}|/\delta)}{3(n-1)} \right) H^2
\end{equation}
with probability at least $1 - \delta$. Here $\xi$ denotes our approximate state removal ``allowance'', where we satisfy $\hat{\rho}_{\pi_e}(\mathcal{S}\setminus\hat{\mathcal{S}}) \leq \xi$ in our construction of $\widehat{M}$, and $V_n$ denotes the sample variance.
\end{theorem}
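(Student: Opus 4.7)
The plan is to combine Corollary \ref{cor:stationary} with a per-state empirical Bernstein concentration bound and a union bound over $\mathcal{S}$. Since Corollary \ref{cor:stationary} already gives $J(\expertpolicy) - J(\hat{\pi}^*) \leq \rho_{\expertpolicy}(\mathcal{S}\setminus\hat{\mathcal{S}}) H^2$, it suffices to bound $\rho_{\expertpolicy}(\mathcal{S}\setminus\hat{\mathcal{S}})$ from above by $\xi$ plus a concentration error term. The role of the $H^2$ in the final statement is entirely inherited from this corollary, so I do not need to revisit that step.

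Next, I would fix a state $s$ and observe that the $n$ per-rollout quantities $\varrho^{(1)}(s),\dots,\varrho^{(n)}(s)$ are i.i.d. (rollouts are drawn independently from $\expertpolicy$) and lie in $[0,1]$ (since the definition of $\varrho$ divides a count in $\{0,1,\dots,H\}$ by $H$, which is valid because the assumption that $\expertpolicy$ always runs for $H$ steps rules out early termination). Their sample mean equals $\hat{\rho}_{\expertpolicy}(s)$ and their expectation equals $\rho_{\expertpolicy}(s)$. Applying the one-sided Maurer--Pontil empirical Bernstein inequality with failure probability $\delta/|\mathcal{S}|$ gives, for each $s$,
\begin{equation*}
\rho_{\expertpolicy}(s) - \hat{\rho}_{\expertpolicy}(s) \leq \sqrt{\tfrac{2 V_n(\varrho^{(1:n)}(s)) \log(2|\mathcal{S}|/\delta)}{n}} + \tfrac{7 \log(2|\mathcal{S}|/\delta)}{3(n-1)}.
\end{equation*}

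Then I would take a union bound over all $s \in \mathcal{S}$ so that the above holds simultaneously with probability at least $1 - \delta$. Summing over $s \in \mathcal{S}\setminus\hat{\mathcal{S}}$, pulling the factor $\sqrt{2\log(2|\mathcal{S}|/\delta)/n}$ out of the sum, and upper-bounding the sum of $\sqrt{V_n}$ terms by the sum over all of $\mathcal{S}$ (at the cost of a loose but clean bound), together with the construction constraint $\hat{\rho}_{\expertpolicy}(\mathcal{S}\setminus\hat{\mathcal{S}}) \leq \xi$, yields
\begin{equation*}
\rho_{\expertpolicy}(\mathcal{S}\setminus\hat{\mathcal{S}}) \leq \xi + \sqrt{\tfrac{2\log(2|\mathcal{S}|/\delta)}{n}} \sum_{s\in\mathcal{S}} \sqrt{V_n(\varrho^{(1:n)}(s))} + \tfrac{7|\mathcal{S}|\log(2|\mathcal{S}|/\delta)}{3(n-1)}.
\end{equation*}
Plugging this into Corollary \ref{cor:stationary} gives the claimed bound.

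There is no deep obstacle here; the subtleties are mostly bookkeeping. The only things worth flagging are (i) justifying that $\varrho^{(i)}(s) \in [0,1]$ uses the full-horizon-rollout assumption explicitly stated in the setup; (ii) although the $\varrho^{(i)}(s)$ are coupled \emph{across} $s$ (a single trajectory determines all of them), independence is required only \emph{across} $i$ for each fixed $s$, and the union bound handles cross-$s$ dependence exactly as in the proof of \cref{thm:empirical-epsilon-estop}; and (iii) the $|\mathcal{S}|$ dependence arises unavoidably from summing per-state square-root deviations, which is the price paid for bounding a sum of means (and the essential reason the theorem statement warns that dependence on $|\mathcal{S}|$ cannot be escaped with this approach).
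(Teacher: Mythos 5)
Your proposal is correct and follows essentially the same route as the paper: a per-state Maurer--Pontil empirical Bernstein bound at level $\delta/|\mathcal{S}|$, a union bound over states, summing the deviations over the removed states (loosely over all of $\mathcal{S}$) together with the construction constraint $\hat{\rho}_{\expertpolicy}(\mathcal{S}\setminus\hat{\mathcal{S}}) \leq \xi$, and then the $\rho$-based sub-optimality bound of \cref{cor:stationary} (which the paper invokes via \cref{thm:imperfect}) to pick up the $H^2$ factor. Your explicit remarks on $\varrho^{(i)}(s) \in [0,1]$ and on independence being needed only across roll-outs are just slightly more careful bookkeeping than the paper's terse proof.
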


\begin{proof}
We follow the same structure as in the proof of \cref{thm:empirical-epsilon-estop}, but use an empirical Bernstein bound in place of Hoeffding's inequality. We know from Theorem 4 of \citet{Maurer2009} that, for each $s$,
\begin{equation}
    \hat{\rho}_{\expertpolicy}(s) \leq \rho_{\expertpolicy}(s) - \left(\sqrt{\frac{2 V_n(\varrho^{(1:n)}(s)) \log(2|\mathcal{S}|/\delta)}{n}} + \frac{7\log(2|\mathcal{S}|/\delta)}{3(n-1)} \right)
\end{equation}
with probability no more than $\delta / |\mathcal{S}|$. It follows that it will hold for \textit{every} $s\in\mathcal{S}$ with probability at least $1-\delta$. In that case we underestimated the true $\rho$-mass of any subset of the state space $\mathcal{S}$ by at most
\begin{equation}
    \sqrt{\frac{2 \log(2|\mathcal{S}|/\delta)}{n}} \sum_{s\in\mathcal{S}} \sqrt{V_n(\varrho^{(1:n)}(s))} + \frac{7 |\mathcal{S}| \log(2|\mathcal{S}|/\delta)}{3(n-1)}
\end{equation}
and so by \cref{thm:imperfect} we have the desired result.
\end{proof}

\section{Experimental details} \label{app:experimental_details}

All experiments were implemented with Numpy and JAX \cite{jax2018github}. NuvemFS (\url{https://nuvemfs.com}) was used to manage code and experimental results. Experiments were run on AWS.

Our code and results are available on GitHub at \url{https://github.com/samuela/e-stops}.

\begin{figure}[h]
    \centering
    \includegraphics[width=0.5\textwidth,valign=T]{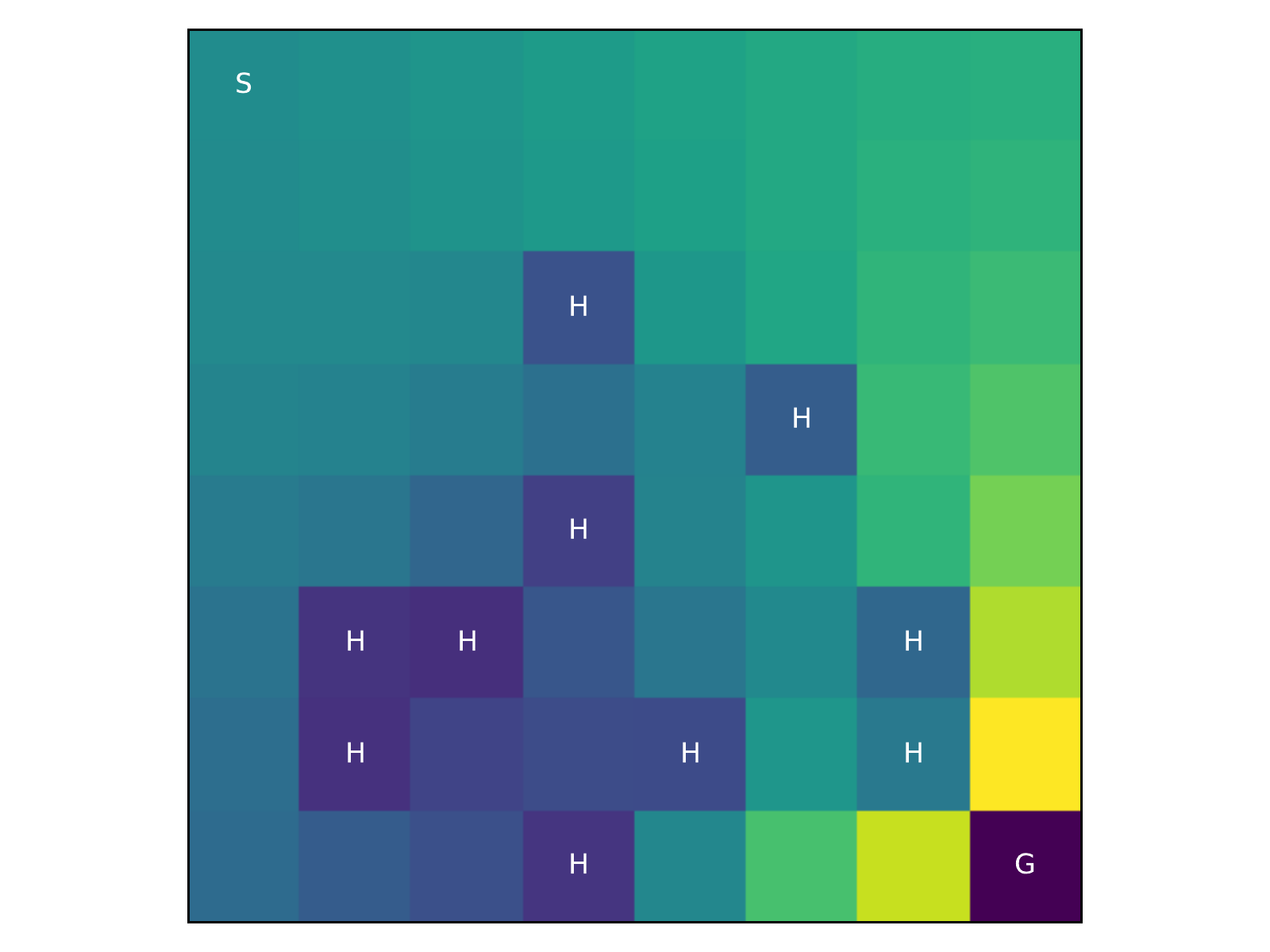}
    \caption{Our FrozenLake-v0 environment. The agent starts in the upper left square and attempts to reach the goal in the lower right square. Tiles marked with ``H'' are holes in the lake which the agent can fall in and recover with only probability 0.01. The optimal state-value function is overlaid.}
    \label{fig:frozenlake_map}
\end{figure}

\subsection{Value iteration}
We ran value iteration on the full environment to convergence (tolerance $1e-6$) to establish the optimal policy. We calculated the state hitting probabilities of this policy exactly through an interpretation of expert policy roll-outs as absorbing Markov chains. These hitting probabilities were then ranked and states were removed in order of their rank until there was no longer a feasible path to the goal ($J(\pi)=0$). The number of floating point operations (FLOPs) used was calculated based on 4 $|\mathcal{S}|^2 |A|$ FLOPs per value iteration update:
\begin{enumerate}
    \item For each state $s$ and each action $a$, calculating the expected value of the next state. ($|\mathcal{S}||A|$ dot products of $|\mathcal{S}|$-vectors.)
    \item Multiplying those values by $\gamma$.
    \item Adding in the expected rewards for every state-action-state transition.
    \item Calculating the maximum for each state $s$ and each action $a$ over $|\mathcal{S}|$ possible next state outcomes.
\end{enumerate}

\subsection{Policy gradient methods}
We ran value iteration on the full environment to convergence (tolerance $1e-6$) to establish the optimal policy. We estimated the state hitting probabilities of this policy with 1,000 roll-outs in the environment. Based on this estimate of $\rho_{\expertpolicy}(s)$ we replaced the least-visited 50\% of states with e-stops.

We ran both Q-learning and Actor-Critic across 96 trials (random seeds 0-95) and plot the median performance per states seen. Error bars denote one standard deviation around the mean and are clipped to the maximum/minimum values. We ran iterative policy evaluation to convergence on the current policy every 10 episodes in order to calculate the cumulative policy reward as plotted.

In order to accommodate the fact that two trials may not have x-coordinates that align (episodes may not be the same length), we linearly interpolated values and plot every 1,000 states seen.

\subsection{DDPG}
Continuous results were trained with DDPG with $\gamma = 0.99, \tau = 0.0001$, Adam with learning rate $0.001$, batch size 128, and action noise that was normally distributed with mean zero and standard deviation $0.1$. The replay buffer had length $2^{20}=1,048,576$. The actor network had structure
\begin{itemize}
    \item Dense(64)
    \item ReLU
    \item Dense(64)
    \item ReLU
    \item Dense(action\_shape)
    \item Tanh
\end{itemize}
and the critic network had structure
\begin{itemize}
    \item Dense(64)
    \item ReLU
    \item Dense(64)
    \item ReLU
    \item Dense(64)
    \item ReLU
    \item Dense(1)
\end{itemize}
We periodically paused training to run policy evaluation on the current policy (without any action noise).

Plotting and error bars are the same as in the deterministic experiments.

\end{document}